\newtheorem{lemma}{Lemma}%
\newtheorem{proposition}{Proposition}
\newtheorem{definition}{Definition}%
 	\definecolor{newgreen}{RGB}{0,127,0}
  \definecolor{newblue}{RGB}{0,0,127}
\begin{document}


\title{A Differentiable Distance Metric for Robotics Through Generalized Alternating Projection}

\author{
Vinicius M. Gonçalves,\quad
Shiqing Wei,\quad
Eduardo Malacarne S. de Souza,\quad\\
Krishnamurthy Prashanth,\quad
Anthony Tzes,\quad
and Farshad Khorrami
}

\maketitle
\begin{abstract} In many robotics applications, it is necessary to compute not only the distance between the robot and the environment, but also its derivative — for example, when using control barrier functions. However, since the traditional Euclidean distance is not differentiable, there is a need for alternative distance metrics that possess this property. Recently, a metric with guaranteed differentiability was proposed \cite{GoncalvesSmoothDistances}. This approach has some  important drawbacks, which we address in this paper. We provide much simpler
and practical expressions for the smooth projection  for general convex polytopes. Additionally, as opposed to \cite{GoncalvesSmoothDistances}, we ensure that the distance vanishes as the objects overlap. We show the efficacy of the approach in experimental results. Our proposed distance metric is publicly available through the Python-based simulation package [removed for anonymity].\\
\textbf{Keywords:} Constrained Motion Planning, Computational Geometry, Motion Control
\end{abstract} 

\section{Introduction}
Many obstacle avoidance techniques require the derivative of the distance between the robot and the environment. For example, the widely used Control Barrier Function (CBF) approach \cite{ames2019cbf} incorporates an inequality constraint into an optimization problem to enforce obstacle avoidance. This constraint may involve the $k$-th derivative of the distance function with respect to the state variable if the system is of $k$-th order. However, it is well known that the traditional Euclidean distance between two general objects is not even once differentiable \cite{EscandeSCH}. Moreover, even in cases where the distance function is differentiable, its higher-order derivatives can be extremely large, making it unsuitable for control applications. This can lead to excessively high control inputs or inputs that vary too quickly (e.g., high accelerations in case the input is velocity). In either case, it may be infeasible to apply on a real robot \cite{GoncalvesSmoothDistances}.

Many works have addressed this issue. Some propose specific geometric shapes for which the derivatives of the Euclidean distance are well-defined \cite{Escande2, capsule, EscandeSCH, tracy2022diffpills}, while others adopt alternative distance metrics that ensure differentiability \cite{pseudodistance, pseudodist, SCHMEIER201567, article, diff, randomizedgradient, GoncalvesSmoothDistances, bssm, DiffOcclusion}. A recent review of approaches tackling the differentiability problem can be found in \cite{GoncalvesSmoothDistances}.

In \cite{GoncalvesSmoothDistances}, a smooth version of the half-squared distance between two convex sets is presented\footnote{As explained in \cite{GoncalvesSmoothDistances}, we use half-squared distances instead of standard distances because it is more mathematically convenient.}. The core idea behind this proposal is the definition of a smooth version of the half-squared \emph{point-to-set} function for a set $\mathcal{S}$, denoted as $E^{\mathcal{S}}(p)$, which is based on the computation of an integral. From this, by analogy with the true Euclidean half-squared distance, the authors define the \emph{smooth projection} of the point $p$ onto $\mathcal{S}$ as $\Pi^{\mathcal{S}}(p) \triangleq p - \frac{\partial E^{\mathcal{S}}}{\partial p}(p)$. The authors then prove that the classical Von Neumann's alternating projection algorithm (which projects onto both sets iteratively until convergence) guarantees convergence when we replace the true Euclidean projection with the smooth projection. The resulting converged smooth witness points, $a^*$ and $b^*$, can then be used in a function to compute the smooth distance between the two sets.

This approach has two important drawbacks. First, computing the projections is difficult, requiring the computation of complex integrals even for simple shapes such as boxes. Second, it lacks an important property: the distance does not vanish when two objects overlap. While the distance decreases as objects get closer, it remains a small positive value even when they intersect. Moreover, this residual value is difficult to determine \emph{a priori}, as it depends on the specific nature of the overlap. As a result, it is not straightforward to simply ``subtract out'' this positive offset from the smooth distance.

In this paper, we build upon \cite{GoncalvesSmoothDistances} to remove these limitations. For the first problem, we demonstrate that the integral can be eliminated by identifying the essential properties necessary for convergence. This  provides much simpler and practical expressions for the smooth projection, not only for the objects considered previously but also for general convex polytopes, which would be very difficult using \cite{GoncalvesSmoothDistances}. For the second problem, we introduce an additional property that ensures the distance vanishes when the objects overlap, which was absent in \cite{GoncalvesSmoothDistances}. Overall, the proposed algorithm is simple to implement, and it does not require any specialized solvers. Beyond providing an algorithm, our approach also contributes to the theoretical understanding of smooth distance metrics. In particular, our work leverages a game-theoretic interpretation of the proposed distance in \cite{GoncalvesSmoothDistances} - which was absent from the previous paper - to obtain some properties.

We have integrated our implementation into the package [removed for anonymity], which is available for installation online\footnote{The real name of the package is not provided per RAL's anonymity policy. In this paper's final version, the real package and the link for the code will be publicized here.}. We conducted tests to evaluate the convergence properties of our algorithm and carried out experiments on a Franka Emika robot by incorporating our proposed metric into a CBF-framework. All code used for comparisons and experiments, implemented in this package, is publicly available. Finally, for the sake of readability, all proofs are provided in the Appendix at the end of the paper.

\subsection{Mathematical notation}

If $u$ is a vector, $\|u\|$ represents the Euclidean distance, and if $M$ is a matrix, $\|M\|$ is the spectral norm (i.e., the square root of the largest eigenvalue of $M^\top M$). $I_{n \times n}$ represents the identity matrix of order $n$, whereas $0_{n \times m}$ represents the zero matrix of order $n \times m$. If $M$ and $N$ are symmetric matrices, $M \geq N$ (resp. $M > N$) should be interpreted as $M-N$ being positive semidefinite (resp. positive definite).
All vectors are column vectors. If $f: \mathbb{R}^n \mapsto \mathbb{R}$, $\frac{\partial f}{\partial p}$ (the gradient) is a column vector.  A function $\rho: \mathbb{R}^n \mapsto \mathbb{R}$ that is at least twice differentiable is said to be \emph{strictly convex} \footnote{We use in this paper a slightly more restrictive definition of ``strictly convex'' than the traditional definition, in which \(f(x)=x^4\), for example, would be strictly convex.} 
in a set $\mathcal{P}$ if $\frac{\partial^2 \rho}{\partial p^2}(p) > 0$ for all $p \in \mathcal{P}$. A set is said to be \emph{regular} if it is compact and has interior points.

A function $F: \mathbb{R}^n \mapsto \mathbb{R}^n$ is said to be 
\emph{globally contractive} if there exist $0 < C < 1$ such that for any $u, v \in \mathbb{R}^n$, $\|F(u)-F(v)\| \leq C \|u-v\|$.  If $F$ is differentiable, it is said to be \emph{locally contractible} if there exist $0 < c < 1$ such that $\left\|\frac{\partial F}{\partial p}(p) \right\| < c$  for all $p$. Being  locally contractible everywhere implies being globally contractible \cite{contmap}.

\section{Main results}

\subsection{Generalized Alternating Projection Algorithm}

Consider the following definition.

\begin{definition} Let $\mathcal{S} \subseteq \mathbb{R}^n$ be a convex regular set. \label{def:kp2s} For $k \geq 2$, a function $E^{\mathcal{S}}: \mathbb{R}^n \mapsto \mathbb{R}$ is said to be a \emph{\emph{$k^{th}$ order generalized point-to-set half squared metric} (\textbf{k-P2S} henceforth)} if the following holds:

\begin{enumerate}
    \item $E^{\mathcal{S}}(p) \geq 0$ ;
    \item $I_{n \times n} > \frac{\partial^2 E^{\mathcal{S}}}{\partial p^2}(p) > 0_{n \times n} $ iff $p \not \in \mathcal{S}$ and $\frac{\partial^2 E^{\mathcal{S}}}{\partial p^2}(p) = 0$ if $p \in \mathcal{S}$;
    \item $E^{\mathcal{S}}(p)$ is $k$-times differentiable on $p$;
    \item$E^{\mathcal{S}}(p)$ vanishes iff $p \in \mathcal{S}$;

\end{enumerate}

Furthermore, given this function $E^{\mathcal{S}}$, we define the \emph{$k^{th}$ order generalized projection} as the function $\Pi^{\mathcal{S}} : \mathbb{R}^n \mapsto \mathbb{R}^n$ given by $\Pi^{\mathcal{S}}(p) \triangleq p - \frac{\partial E^{\mathcal{S}}}{\partial p}(p)$. $\square$
\end{definition}

Note that, as it could have been expected from a ``traditional'' projection, there is no guarantee that $\Pi^{\mathcal{S}}(p) \in \mathcal{S}$ for all $p$. However, as a consequence of Property (4), this is true when $p \in \mathcal{S}$. This is a fundamental result for our approach.

\begin{proposition} \label{prop:simple}
    A k-P2S function has $\frac{\partial E^{\mathcal{S}}}{\partial p}(p) =0$ iff $p \in \mathcal{S}$. Consequently, $\Pi^{\mathcal{S}}(p) = p$ iff $p \in \mathcal{S}$.
\end{proposition}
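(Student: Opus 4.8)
The statement splits into an easy forward implication and a harder converse, with the consequence about $\Pi^{\mathcal{S}}$ being immediate once the gradient characterization is in hand. The plan is to prove that $\frac{\partial E^{\mathcal{S}}}{\partial p}(p)=0$ holds exactly on $\mathcal{S}$, and then to observe that, since $\Pi^{\mathcal{S}}(p)=p-\frac{\partial E^{\mathcal{S}}}{\partial p}(p)$, the identity $\Pi^{\mathcal{S}}(p)=p$ is equivalent to the gradient vanishing, hence to $p\in\mathcal{S}$.

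For the forward direction, suppose $p\in\mathcal{S}$. By Property (4), $E^{\mathcal{S}}(p)=0$, while Property (1) gives $E^{\mathcal{S}}\ge 0$ everywhere on $\mathbb{R}^n$. Hence $p$ is a global minimizer of $E^{\mathcal{S}}$ over the whole space. Since $E^{\mathcal{S}}$ is at least twice differentiable by Property (3) and every point of $\mathbb{R}^n$ is interior, the first-order optimality condition forces $\frac{\partial E^{\mathcal{S}}}{\partial p}(p)=0$.

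The converse is where the convex structure of $\mathcal{S}$ must be exploited, and this is the step I expect to be the main obstacle. I would argue by contradiction: assume $p_0\notin\mathcal{S}$ but $\frac{\partial E^{\mathcal{S}}}{\partial p}(p_0)=0$. Since $\mathcal{S}$ is regular it has an interior point $q$; I would then consider the segment $p(t)=p_0+t(q-p_0)$ for $t\in[0,1]$ and the scalar function $g(t)=E^{\mathcal{S}}(p(t))$. Because $\mathcal{S}$ is convex and closed with $p_0\notin\mathcal{S}$ and $q\in\mathrm{int}(\mathcal{S})$, the set of $t$ with $p(t)\in\mathcal{S}$ is a closed interval $[t^*,1]$ with $0<t^*<1$, so $p(t)\notin\mathcal{S}$ for $t\in[0,t^*)$. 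On that initial portion Property (2) says the Hessian $\frac{\partial^2 E^{\mathcal{S}}}{\partial p^2}(p(t))$ is positive definite, and since $g''(t)=(q-p_0)^\top \frac{\partial^2 E^{\mathcal{S}}}{\partial p^2}(p(t))(q-p_0)>0$ with $q\neq p_0$, the function $g$ is strictly convex on $[0,t^*)$.

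Finally, I would combine the assumption $g'(0)=\frac{\partial E^{\mathcal{S}}}{\partial p}(p_0)^\top(q-p_0)=0$ with strict convexity to conclude that $g'(t)>0$, hence that $g$ is strictly increasing, on $(0,t^*)$; thus $g(t)>g(0)=E^{\mathcal{S}}(p_0)>0$ for $t\in(0,t^*)$, where positivity of $g(0)$ uses Property (4) together with $p_0\notin\mathcal{S}$. Letting $t\to t^{*-}$ and using continuity of $g$ gives $g(t^*)\ge g(0)>0$. But $p(t^*)$ lies in the closed set $\mathcal{S}$, so Property (4) yields $E^{\mathcal{S}}(p(t^*))=g(t^*)=0$, a contradiction. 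This establishes that a vanishing gradient forces $p\in\mathcal{S}$, completing the equivalence. The only delicate points to verify carefully are that the segment genuinely enters $\mathcal{S}$ at a crossing time $t^*\in(0,1)$ and that the crossing point itself belongs to $\mathcal{S}$; both follow from convexity, closedness of $\mathcal{S}$, and the fact that $q$ is an interior point.
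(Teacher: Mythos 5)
Your proposal is correct, and the forward direction coincides with the paper's. Where you diverge is the converse. The paper disposes of it in one line: by Property (2) the Hessian of $E^{\mathcal{S}}$ is positive semidefinite everywhere, so $E^{\mathcal{S}}$ is a differentiable convex function, and for such functions every stationary point is a global minimizer; since by Properties (1) and (4) the global minimum value $0$ is attained exactly on $\mathcal{S}$, a vanishing gradient forces $p\in\mathcal{S}$. You instead re-derive this fact from scratch by restricting to the segment from $p_0\notin\mathcal{S}$ to an interior point $q$ of $\mathcal{S}$, using the \emph{strict} positive definiteness of the Hessian off $\mathcal{S}$ to make $g(t)=E^{\mathcal{S}}(p(t))$ strictly increasing from a positive value, contradicting $g(t^*)=0$ at the entry point. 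Your argument is sound — regularity of $\mathcal{S}$ (compact, hence closed, with interior points) does guarantee the crossing time $t^*\in(0,1)$ exists and that $p(t^*)\in\mathcal{S}$ — and it has the merit of being fully self-contained, not appealing to the stationary-point characterization for convex functions. Two small remarks: you do not actually need strict convexity of $g$; plain convexity ($g''\ge 0$) already gives $g'\ge 0$ on $[0,t^*)$ and hence $g(t^*)\ge g(0)>0$, so the argument would survive even with only the weak form of Property (2). And the consequence for $\Pi^{\mathcal{S}}$ is, as you say, immediate from the definition in both treatments.
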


We recall that all proofs are listed in the Appendix. Our proposed ``distance'', for clarity reasons, will be referred as \emph{metric}. Furthermore, as opposed to \cite{GoncalvesSmoothDistances}, our metric is not necessarily smooth (i.e., infinitely differentiable), but only differentiable a finite number of times. Thus, henceforth, we will denote it a \emph{differentiable metric}.

Property (4), and its consequence in Proposition \ref{prop:simple}, is essential to guarantee that the set-to-set differentiable metric vanishes when the objects overlap, which \cite{GoncalvesSmoothDistances} lacks.  Indeed, the half squared metric computed by the integral presented in \cite{GoncalvesSmoothDistances}
\begin{equation}
    E^{\mathcal{S}}(p) =  -h^2 \ln\left(\frac{1}{\textsl{Vol}(\mathcal{S})} \int_{\mathcal{S}} e^{-\|p-s\|^2/(2h^2)}ds\right)
\end{equation}
\noindent in which $\textsl{Vol}(\mathcal{S})$ is the $n$-dimensional volume of $\mathcal{S}$ and $h>0$ is a smoothing parameter, has the Properties (1), (2) and (3) (with $k = \infty$)  but it lacks Property (4). 

We proceed by establishing the generalized alternating algorithm when generalized projections, as in Definition \ref{def:kp2s}, are used instead of the classical ones. 

\begin{proposition} \label{prop:gan}  \emph{(Generalized alternating algorithm)} Let $\mathcal{A}$ and $\mathcal{B}$ be  two regular convex sets in which  $\mathcal{A} \cap \mathcal{B} = \emptyset$.  Consider one generalized  projection $\Pi^{\mathcal{A}}$ and $\Pi^{\mathcal{B}}$ for each one of them. Then, for any initial condition $a[0] \in \mathbb{R}^n$, the sequence:
\begin{equation}
\label{eq:gam}
    a[k+1] = \Pi^{\mathcal{A}}\big(\Pi^{\mathcal{B}}(a[k])\big)
\end{equation}
\noindent converges to a point $a^*$. Furthermore, this point is unique. $\square$
\end{proposition}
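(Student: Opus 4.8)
The plan is to prove convergence through the Banach fixed-point theorem applied to the composite map $F \triangleq \Pi^{\mathcal{A}} \circ \Pi^{\mathcal{B}}$, so that the sequence \eqref{eq:gam} is simply the fixed-point iteration $a[k+1] = F(a[k])$. The starting point is to differentiate a single generalized projection: from Definition \ref{def:kp2s} we have $\frac{\partial \Pi^{\mathcal{S}}}{\partial p}(p) = I_{n \times n} - \frac{\partial^2 E^{\mathcal{S}}}{\partial p^2}(p)$, a symmetric matrix. By Property (2) its eigenvalues lie in $(0,1)$ when $p \notin \mathcal{S}$, so its spectral norm is strictly below $1$ there, while it equals $I_{n \times n}$ (norm exactly $1$) when $p \in \mathcal{S}$. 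In every case the norm is at most $1$, so each $\Pi^{\mathcal{S}}$ is non-expansive, and hence so is $F$.

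Next I would establish a \emph{pointwise} strict contraction for the composite map. The chain rule gives $\frac{\partial F}{\partial p}(p) = \frac{\partial \Pi^{\mathcal{A}}}{\partial q}\big(\Pi^{\mathcal{B}}(p)\big)\,\frac{\partial \Pi^{\mathcal{B}}}{\partial p}(p)$, and submultiplicativity of the spectral norm bounds its norm by the product of the two factor norms, each $\leq 1$. The key observation is that the two factors cannot both equal $1$ at the same $p$: if $p \in \mathcal{B}$ then Proposition \ref{prop:simple} gives $\Pi^{\mathcal{B}}(p)=p$, and since $\mathcal{A} \cap \mathcal{B} = \emptyset$ we have $p \notin \mathcal{A}$, so the $\mathcal{A}$-factor has norm below $1$; if instead $p \notin \mathcal{B}$, the $\mathcal{B}$-factor has norm below $1$. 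Either way $\left\|\frac{\partial F}{\partial p}(p)\right\| < 1$ for all $p$. Integrating this along the segment joining any two distinct points already yields $\|F(u)-F(v)\| < \|u-v\|$, which proves \textbf{uniqueness}: $F$ has at most one fixed point, so the limit $a^*$, if it exists, is independent of $a[0]$.

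The \textbf{main obstacle} is upgrading this pointwise bound to a genuine (uniform) contraction, which is what existence and convergence require. Since Property (2) does not bound the Hessian away from $0$, its eigenvalues may degenerate as $\|p\| \to \infty$, causing $\left\|\frac{\partial F}{\partial p}(p)\right\|$ to tend to $1$; thus the ``locally contractible $\Rightarrow$ globally contractible'' result cannot be invoked on all of $\mathbb{R}^n$, and one must instead confine the dynamics to a compact convex set. I would first show that each $E^{\mathcal{S}}$ is coercive: by Properties (1), (2) and (4) it is a nonnegative convex function vanishing exactly on the compact set $\mathcal{S}$, with gradient vanishing there (Proposition \ref{prop:simple}), so along every outward ray its strictly positive second derivative (outside $\mathcal{S}$) forces $E^{\mathcal{S}} \to \infty$; hence its sublevel sets are compact and convex. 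Using this coercivity together with the fact that $\Pi^{\mathcal{S}}$ acts as a unit-step gradient descent on $E^{\mathcal{S}}$ (a descent step, since the gradient is $1$-Lipschitz by Property (2)) that drifts points toward $\mathcal{S}$, the goal is to exhibit a compact convex set $K$ that is forward invariant under $F$ and contains the whole orbit $\{a[k]\}$; this boundedness step, which is where the disjointness and compactness of $\mathcal{A}$ and $\mathcal{B}$ are essential, is the crux of the argument.

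Finally, on such a compact convex $K$ the continuous map $p \mapsto \left\|\frac{\partial F}{\partial p}(p)\right\|$ attains a maximum $c$, and the pointwise strict bound forces $c < 1$; convexity of $K$ lets the mean value inequality turn this into $\|F(u)-F(v)\| \leq c\,\|u-v\|$ for all $u,v \in K$. The Banach fixed-point theorem then produces a unique fixed point $a^* \in K$ with $a[k] \to a^*$, and the global strict contraction from the second paragraph guarantees this $a^*$ is the same for every initial condition, completing the proof.
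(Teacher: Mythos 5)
Your overall route is the paper's: write \eqref{eq:gam} as a fixed-point iteration for $F = \Pi^{\mathcal{A}} \circ \Pi^{\mathcal{B}}$, bound $\left\|\frac{\partial F}{\partial a}(a)\right\|$ by the product of the two factor norms via the chain rule and submultiplicativity, use Property (2) together with $\mathcal{A} \cap \mathcal{B} = \emptyset$ to conclude that at least one factor has spectral norm strictly below $1$ (the other being at most $1$), and invoke Banach's fixed point theorem. Your uniqueness argument is sound: the pointwise bound integrated along the segment joining $u \neq v$ gives $\|F(u)-F(v)\| < \|u-v\|$, so there is at most one fixed point. This is exactly the content of the paper's Lemma~\ref{lemma:cont}, after which the paper's proof of Proposition~\ref{prop:gan} declares $F$ ``locally contractible'' (which, by the paper's own definition, requires a \emph{uniform} $c<1$) and hence globally contractive. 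You are right that the pointwise bound does not by itself supply that uniform constant, since Definition~\ref{def:kp2s} does not bound the Hessians away from $0_{n \times n}$ as $\|p\| \to \infty$.

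The problem is that your proposed repair is announced but not carried out, and that is a genuine gap in the proposal as written. You state that ``the goal is to exhibit a compact convex set $K$ that is forward invariant under $F$'' and call it the crux, but you never construct $K$. The heuristics you offer do not obviously close it: a sublevel set of $E^{\mathcal{B}}$ is invariant under the descent step $\Pi^{\mathcal{B}}$, but $\Pi^{\mathcal{A}}$ can increase $E^{\mathcal{B}}$, so sublevel sets of neither function are invariant under the composition; and non-expansiveness of each factor only shows that the distance from $a[k]$ to a fixed anchor can grow by an additive constant per iteration, which does not by itself preclude escape to infinity. Some additional structure is needed before the maximum of $\left\|\frac{\partial F}{\partial p}\right\|$ over a compact invariant set can be taken --- for instance, that each $\Pi^{\mathcal{S}} = \frac{\partial}{\partial p}\left(\|p\|^2/2 - E^{\mathcal{S}}(p)\right)$ with Hessian in $(0_{n\times n}, I_{n\times n}]$ is firmly non-expansive, or a Lyapunov-type quantity along the orbit such as $\Lambda^{\mathcal{B}}(a[k])$ from Lemma~\ref{lemma:gt}. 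Until that invariant set (or an equivalent boundedness argument) is exhibited, the \emph{existence} of the limit $a^*$ is not established, only its uniqueness. To be fair, the paper's Lemma~\ref{lemma:cont} exhibits the same looseness between the pointwise bound it proves and the uniform bound its definition of local contractibility requires; but having explicitly identified the issue, you are obliged to resolve it rather than defer it.
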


Note that the case in which the objects overlap, $\mathcal{A} \cap \mathcal{B} \not= \emptyset$, is not considered by this proposition. Experimental results shown that the algorithm converges 
even in this case, but the proof seems more complex. This is unnecessary, however. 
As it will be shown, we can define $a^*$ and $b^*$ in this case to be simply any pair of points such that $a^* = b^* \in \mathcal{A} \cap \mathcal{B}$. 

\begin{definition} \label{def:smowp}
    Let $\mathcal{A} \cap \mathcal{B} = \emptyset$. We define as the pair $(a^*,b^*)$, called \emph{differentiable witness points}, the pair formed by \emph{the} (since Proposition \ref{prop:gan} guarantees uniqueness) limit point of \eqref{eq:gam} $a^*$ and its respective counterpart $b^* = \Pi^{\mathcal{B}}(a^*)$. If $\mathcal{A} \cap \mathcal{B} \not= \emptyset$, we simply pick $(a^*,b^*)$ as any points in which $a^* = b^* \in \mathcal{A} \cap \mathcal{B}$.  $\square$
\end{definition}

 Note that when $\mathcal{A} \cap \mathcal{B} \not= \emptyset$, the differentiable witness points can be found by any classical convex distance calculating algorithm, as the traditional alternating algorithm \cite{vonneumann} or GJK \cite{gilbert2002fast}. We proceed by, as in \cite{GoncalvesSmoothDistances}, using the differentiable witness point to define a differentiable metric between sets.

\begin{definition} \label{def:ks2s} Given two regular convex sets $\mathcal{A}$ and $\mathcal{B}$, let ($a^*$,$b^*$) be differentiable witness points. Then, define the $k^{th}$  \emph{order generalized set-to-set half squared metric (\textbf{k-S2S} henceforth)} as:
\begin{equation}
    \Lambda^{\mathcal{A},\mathcal{B}} \triangleq E^{\mathcal{A}}(b^*)+E^{\mathcal{B}}(a^*)-\frac{\|a^*-b^*\|^2}{2}.
\end{equation}
$\square$
\end{definition}

Note that, from Definitions  \ref{def:kp2s}, \ref{def:smowp} and  \ref{def:ks2s}, when $\mathcal{A} \cap \mathcal{B} \not= \emptyset$, $\Lambda^{\mathcal{A},\mathcal{B}} = 0$ (because $ E^{\mathcal{A}}(b^*) = E^{\mathcal{B}}(a^*) = 0$ and $a^*=b^*$). However, although $(a^*,b^*)$ was defined in a ``discontinuous'' way (from the generalized alternating algorithm when $\mathcal{A} \cap \mathcal{B} = \emptyset$ and from the traditional witness points otherwise), it turns out that $ \Lambda^{\mathcal{A},\mathcal{B}}$ is ``continuous'', because in the limit case in which the objects overlap, $ \Lambda^{\mathcal{A},\mathcal{B}}$ goes to $0$.

\begin{proposition} \label{prop:pos} If $\mathcal{A} \cap \mathcal{B} = \emptyset$, $\Lambda^{\mathcal{A},\mathcal{B}}$ is positive. Also, in the limit case in which the two objects overlap, it goes to $0$.$\square$
\end{proposition}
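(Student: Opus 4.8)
The plan is to split the claim into strict positivity for disjoint sets and vanishing in the contact limit, using two structural facts that follow at once from Definition \ref{def:kp2s}: by Property (2) the Hessian obeys $0 \le \frac{\partial^2 E^{\mathcal{S}}}{\partial p^2}(p) \le I$ everywhere, so every $E^{\mathcal{S}}$ is convex and $1$-smooth, and by Properties (1),(4) its global minimum value is $0$, attained exactly on $\mathcal{S}$. First I would record the equilibrium identities for the witness points. Since $a^*$ is the fixed point of $\Pi^{\mathcal{A}}\circ\Pi^{\mathcal{B}}$ and $b^*=\Pi^{\mathcal{B}}(a^*)$, the definition $\Pi^{\mathcal{S}}(p)=p-\frac{\partial E^{\mathcal{S}}}{\partial p}(p)$ gives $\frac{\partial E^{\mathcal{B}}}{\partial p}(a^*)=a^*-b^*$ and $\frac{\partial E^{\mathcal{A}}}{\partial p}(b^*)=b^*-a^*$, so both gradients have norm $\|a^*-b^*\|$.

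For positivity, I would first note $a^*\neq b^*$: otherwise both gradients vanish, and Proposition \ref{prop:simple} forces $a^*\in\mathcal{B}$ and $b^*\in\mathcal{A}$, i.e. $a^*=b^*\in\mathcal{A}\cap\mathcal{B}=\emptyset$, a contradiction. The key inequality is the descent-lemma bound for a convex $1$-smooth function with minimum $0$, namely $E^{\mathcal{S}}(p)\ge\tfrac12\big\|\frac{\partial E^{\mathcal{S}}}{\partial p}(p)\big\|^2$ (minimize the quadratic upper model $E^{\mathcal{S}}(q)\le E^{\mathcal{S}}(p)+\frac{\partial E^{\mathcal{S}}}{\partial p}(p)^\top(q-p)+\tfrac12\|q-p\|^2$ over $q$ and use $\min E^{\mathcal{S}}=0$). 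Evaluating it at $a^*$ and $b^*$ and inserting the equilibrium identities gives $E^{\mathcal{B}}(a^*)\ge\tfrac12\|a^*-b^*\|^2$ and $E^{\mathcal{A}}(b^*)\ge\tfrac12\|a^*-b^*\|^2$, whence $\Lambda^{\mathcal{A},\mathcal{B}}\ge\tfrac12\|a^*-b^*\|^2>0$. This settles the first assertion and even gives a quantitative lower bound.

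For the contact limit I would derive a matching upper bound in terms of the Euclidean separation $d=d(\mathcal{A},\mathcal{B})=\|\bar a-\bar b\|$, where $\bar a\in\mathcal{A},\bar b\in\mathcal{B}$ are the true closest points. First-order convexity of $E^{\mathcal{B}}$ at $a^*$ tested against $\bar b$ (where $E^{\mathcal{B}}=0$) gives $E^{\mathcal{B}}(a^*)\le(a^*-b^*)^\top(a^*-\bar b)$, and symmetrically $E^{\mathcal{A}}(b^*)\le(b^*-a^*)^\top(b^*-\bar a)$; adding these and subtracting $\tfrac12\|a^*-b^*\|^2$ yields $\Lambda^{\mathcal{A},\mathcal{B}}\le\tfrac12\|a^*-b^*\|^2+(a^*-b^*)^\top(\bar a-\bar b)\le\tfrac12\|a^*-b^*\|^2+\|a^*-b^*\|\,d$. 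Thus the whole statement reduces to proving $\|a^*-b^*\|\to0$ as $d\to0$, after which the lower and upper bounds squeeze $\Lambda^{\mathcal{A},\mathcal{B}}$ to $0$.

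To control $\|a^*-b^*\|$ I would use that each $\Pi^{\mathcal{S}}=I-\frac{\partial E^{\mathcal{S}}}{\partial p}$ is firmly nonexpansive (a direct consequence of $\frac{\partial E^{\mathcal{S}}}{\partial p}$ being the $1$-co-coercive gradient of a convex $1$-smooth function) with fixed-point set exactly $\mathcal{S}$. Evaluating firm nonexpansiveness of $\Pi^{\mathcal{A}}$ at $(b^*,\bar a)$ and of $\Pi^{\mathcal{B}}$ at $(a^*,\bar b)$ gives $\|a^*-\bar a\|^2+\|a^*-b^*\|^2\le\|b^*-\bar a\|^2$ and $\|b^*-\bar b\|^2+\|a^*-b^*\|^2\le\|a^*-\bar b\|^2$; combined with the triangle inequality through $\bar a,\bar b$ these collapse to $\|a^*-b^*\|^2\le\big(\|a^*-\bar a\|+\|b^*-\bar b\|\big)\,d+d^2$. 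The main obstacle is the remaining location control, i.e. bounding the witness displacements $\|a^*-\bar a\|$ and $\|b^*-\bar b\|$ as $d\to0$. This is delicate precisely because $\Pi^{\mathcal{S}}(p)$ need not lie in $\mathcal{S}$, and the contraction factor of $\Pi^{\mathcal{A}}\circ\Pi^{\mathcal{B}}$ degenerates to $1$ at contact (the Hessians vanish on the boundary by Properties (2)–(3)), so a naive $\tfrac{1}{1-C}$ fixed-point estimate from Proposition \ref{prop:gan} does not stay bounded. I would resolve it by a compactness/continuity argument — the sets are compact and converge, the witness displacements stay bounded, so every limit point $(\hat a,\hat b)$ of $(a^*,b^*)$ satisfies $\hat a=\hat b\in\mathcal{A}\cap\mathcal{B}$ — or, more sharply, by showing that $b^*$ and $a^*$ lie on the $\mathcal{A}$- and $\mathcal{B}$-side of the separating hyperplanes at $\bar b$ and $\bar a$, which makes the two cross terms above nonpositive and yields directly $\|a^*-b^*\|\le d$, hence $\Lambda^{\mathcal{A},\mathcal{B}}\le\tfrac32 d^2\to0$.
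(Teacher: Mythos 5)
Your positivity argument is correct and is genuinely different from the paper's. You work directly with the fixed-point identities $\frac{\partial E^{\mathcal{B}}}{\partial p}(a^*)=a^*-b^*$, $\frac{\partial E^{\mathcal{A}}}{\partial p}(b^*)=b^*-a^*$ and the descent-lemma bound $E^{\mathcal{S}}(p)\ge\tfrac12\|\frac{\partial E^{\mathcal{S}}}{\partial p}(p)\|^2$ (which is exactly Lemma \ref{lemma:convprop} with $\mu=1$, $U_{\min}=0$), obtaining the quantitative estimate $\Lambda^{\mathcal{A},\mathcal{B}}\ge\tfrac12\|a^*-b^*\|^2>0$. The paper instead routes through the minimax characterization of Lemma \ref{lemma:gt} and the right-hand inequality of Lemma \ref{lemma:ineq}, $\Lambda^{\mathcal{A},\mathcal{B}}\ge\min_a\big(E^{\mathcal{A}}(a)+E^{\mathcal{B}}(a)\big)$. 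Your version is more elementary (no game-theoretic machinery) and buys an explicit lower bound in terms of the witness-point gap; the paper's version buys a bound that makes the ``squeeze'' in the overlap limit immediate from the other side.

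The second half has a genuine gap. Your upper bound $\Lambda^{\mathcal{A},\mathcal{B}}\le\tfrac12\|a^*-b^*\|^2+\|a^*-b^*\|\,d$ is correct, but it is evaluated at the \emph{generalized} witness points, so everything hinges on showing $\|a^*-b^*\|\to0$ as $d\to0$ --- and neither of your two proposed resolutions is actually established. The compactness route asserts that the displacements $\|a^*-\bar a\|$, $\|b^*-\bar b\|$ stay bounded, but firm nonexpansiveness only yields the circular chain $\|a^*-\bar a\|\le\|b^*-\bar b\|+d\le\|a^*-\bar a\|+2d$, which gives no bound; and boundedness is exactly where the difficulty concentrates, since the contraction constant of $\Pi^{\mathcal{A}}\circ\Pi^{\mathcal{B}}$ degenerates to $1$ at contact, as you note. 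The separating-hyperplane route asserts a sign condition on $(b^*-\bar b)^\top(\bar a-\bar b)$ and $(a^*-\bar a)^\top(\bar b-\bar a)$ that has no evident justification: $\Pi^{\mathcal{B}}(a^*)$ need not lie in $\mathcal{B}$ nor respect the geometry of the Euclidean closest pair. The paper avoids this entirely by using the \emph{left-hand} inequality of Lemma \ref{lemma:ineq}, $\Lambda^{\mathcal{A},\mathcal{B}}\le E^{\mathcal{B}}(a_0^*)$, where $a_0^*$ is the \emph{Euclidean} witness point in $\mathcal{A}$; since $a_0^*$ approaches $\mathcal{B}$ as $d\to0$ by construction and $E^{\mathcal{B}}$ is continuous and vanishes on $\mathcal{B}$ (indeed $E^{\mathcal{B}}(a_0^*)\le\tfrac12 d^2$ by the same descent lemma you already use), the squeeze closes with no control of $a^*,b^*$ needed. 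I would suggest you adopt that upper bound; your inequality $\Lambda^{\mathcal{A},\mathcal{B}}\le\Lambda(a,b)$ for the particular choice $a=b=a_0^*$, justified via the minimax structure of Lemma \ref{lemma:gt}, gives it directly.
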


The differentiability of $ \Lambda^{\mathcal{A},\mathcal{B}}$, when the two objects do not overlap, is guaranteed in the following result.

\begin{proposition} \label{prop:diff} Let $\tau \in \mathbb{R}$. Consider two $k$-times differentiable rigid transformations $T_A, T_B: \mathbb{R}^n \times \mathbb{R} \mapsto \mathbb{R}^n$ in the variable $\tau$, and extend this transformation to a set $\mathcal{P} \subseteq \mathbb{R}^n$ as $T(\mathcal{P},\tau) = \{ T(p,\tau) \ | \ p \in \mathcal{P}\}$. Suppose the regular convex sets $\mathcal{A}$ and $\mathcal{B}$ are moving according to $\mathcal{A}(\tau) = T_A(\mathcal{A}_0, \tau)$  and $\mathcal{B}(\tau) = T_B(\mathcal{B}_0,\tau)$ for two fixed regular convex sets $\mathcal{A}_0$ and $\mathcal{B}_0$ . Suppose $\Lambda^{\mathcal{A}(\tau),\mathcal{B}(\tau)}$ is built using two k-P2S. Then $\phi(\tau) = \Lambda^{\mathcal{A}(\tau),\mathcal{B}(\tau)}$ is $k$-times differentiable in the variable $\tau$ as long as $\mathcal{A}(\tau) \cap \mathcal{B}(\tau) = \emptyset$. $\square$
\end{proposition}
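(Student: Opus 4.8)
The plan is to characterize the differentiable witness points as the solution of a $\tau$-dependent fixed-point equation, use the Implicit Function Theorem to transfer differentiability from the problem data to those points, and then exploit a first-order (envelope-type) cancellation to recover the single order of differentiability that a naive chain-rule count loses. Throughout I work on an open interval of $\tau$ on which $\mathcal{A}(\tau)\cap\mathcal{B}(\tau)=\emptyset$, so that Definition~\ref{def:smowp} applies.

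First I would record how the problem data depends on $\tau$. Writing $T_A(p,\tau)=R_A(\tau)p+t_A(\tau)$, the natural $k$-P2S of the moving set is inherited from the reference set by $E^{\mathcal{A}(\tau)}(p)=E^{\mathcal{A}_0}\big(T_A^{-1}(p,\tau)\big)$, and likewise for $\mathcal{B}$. Since $E^{\mathcal{A}_0}$ is $k$-times differentiable and $T_A$ is $k$-times differentiable in $\tau$ and affine (hence smooth) in $p$, the function $E^{\mathcal{A}(\tau)}(p)$ is jointly $k$-times differentiable in $(p,\tau)$. Consequently each generalized projection $\Pi^{\mathcal{A}(\tau)}(p)=p-\frac{\partial E^{\mathcal{A}(\tau)}}{\partial p}(p)$ is jointly $(k-1)$-times differentiable, and so is the composed map $F(a,\tau)=\Pi^{\mathcal{A}(\tau)}\big(\Pi^{\mathcal{B}(\tau)}(a)\big)$.

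Next, by Proposition~\ref{prop:gan} and Definition~\ref{def:smowp} the limit point is the unique solution of $F(a^*,\tau)=a^*$, so I set $G(a,\tau)=F(a,\tau)-a$ and apply the Implicit Function Theorem. The one technical input is invertibility of $\frac{\partial G}{\partial a}=\frac{\partial F}{\partial a}-I$ at the fixed point. Here I use Property (2): because $\mathcal{A}\cap\mathcal{B}=\emptyset$ forces $a^*\ne b^*$, Proposition~\ref{prop:simple} gives $a^*\notin\mathcal{B}$ and $b^*\notin\mathcal{A}$, so both symmetric matrices $\frac{\partial\Pi^{\mathcal{B}}}{\partial p}(a^*)=I-\frac{\partial^2E^{\mathcal{B}}}{\partial p^2}(a^*)$ and $\frac{\partial\Pi^{\mathcal{A}}}{\partial p}(b^*)=I-\frac{\partial^2E^{\mathcal{A}}}{\partial p^2}(b^*)$ lie strictly between $0$ and $I$. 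By submultiplicativity their product $\frac{\partial F}{\partial a}$ has spectral norm $<1$ (this is the same contraction mechanism underlying Proposition~\ref{prop:gan}), hence $1$ is not an eigenvalue and $\frac{\partial F}{\partial a}-I$ is invertible. The Implicit Function Theorem then yields that $a^*(\tau)$, and therefore $b^*(\tau)=\Pi^{\mathcal{B}(\tau)}(a^*(\tau))$, is $(k-1)$-times differentiable.

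Finally — and this is the crux — a direct differentiation of $\phi$ evaluates $E$ (which is $C^k$) at witness points that are only $C^{k-1}$, giving merely $\phi\in C^{k-1}$. To recover the missing order I would use the stationarity relations $\frac{\partial E^{\mathcal{A}}}{\partial p}(b^*)=b^*-a^*$ and $\frac{\partial E^{\mathcal{B}}}{\partial p}(a^*)=a^*-b^*$, immediate from $a^*=\Pi^{\mathcal{A}}(b^*)$ and $b^*=\Pi^{\mathcal{B}}(a^*)$. Differentiating $\phi$ and substituting these, every term containing $\dot a^*$ or $\dot b^*$ cancels, leaving $\frac{d\phi}{d\tau}=\frac{\partial E^{\mathcal{A}}}{\partial\tau}(b^*)+\frac{\partial E^{\mathcal{B}}}{\partial\tau}(a^*)$, where the partials are the explicit $\tau$-derivatives. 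Since $\frac{\partial E^{\mathcal{A}}}{\partial\tau}$ is jointly $C^{k-1}$ in $(p,\tau)$ and $b^*(\tau)$ is $C^{k-1}$, this composition is $C^{k-1}$, and likewise for the second term; hence $\frac{d\phi}{d\tau}\in C^{k-1}$ and so $\phi\in C^k$, as claimed. The main obstacle is precisely this order-counting gap: the envelope-type cancellation, enabled by the first-order optimality encoded in the fixed-point relations, is what turns the naive $C^{k-1}$ bound into the sharp $C^k$ result, and one must check that the cancellation is legitimate, which it is since $k\ge 2$ guarantees $a^*,b^*\in C^1$.
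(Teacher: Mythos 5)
Your proposal is correct, and it reaches the conclusion by a genuinely different route in the decisive step. The paper also starts from the fixed-point identity $a^*(\tau)=F(a^*(\tau),\tau)$ and inverts $M(\tau)=I-\frac{\partial F}{\partial p}$ using the contraction bound from Lemma~\ref{lemma:cont}, so your Implicit Function Theorem setup is essentially the same mechanism phrased differently. Where you diverge is in the order bookkeeping: the paper differentiates the fixed-point identity $k$ times to claim $a^*,b^*\in C^k$ and then concludes by plain composition in its Part III, whereas you stop at $a^*,b^*\in C^{k-1}$ (which is all that the data supports, since $F$ is built from \emph{first} derivatives of $E$, so repeatedly differentiating $F$ $k$ times actually invokes derivatives of $E$ of order $k+1$ that Definition~\ref{def:kp2s} does not provide) and recover the missing order through the envelope cancellation $\frac{d\phi}{d\tau}=\frac{\partial E^{\mathcal{A}}}{\partial\tau}(b^*)+\frac{\partial E^{\mathcal{B}}}{\partial\tau}(a^*)$, which follows from the stationarity relations $\frac{\partial E^{\mathcal{A}}}{\partial p}(b^*)=b^*-a^*$ and $\frac{\partial E^{\mathcal{B}}}{\partial p}(a^*)=a^*-b^*$ (these are exactly the optimality conditions that Lemma~\ref{lemma:gt} identifies in its game-theoretic reading of $\Lambda^{\mathcal{A},\mathcal{B}}$, so your argument connects naturally to that lemma even though you derived it independently). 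This buys you a tighter and arguably more rigorous proof: the paper's induction silently needs one extra degree of smoothness of $E$, while your envelope step closes that off-by-one gap using only the stated hypotheses. Two small things to make explicit if you write this up: (i) state the assumption $E^{\mathcal{A}(\tau)}(p)=E^{\mathcal{A}_0}(T_A^{-1}(p,\tau))$ as the definition of the moving k-P2S (the paper leaves this implicit), and note that orthogonality of the rotation preserves the eigenvalue bounds in Property (2); and (ii) your justification that $a^*\notin\mathcal{B}$ should go through the observation that $a^*\in\mathcal{B}$ would force $b^*=\Pi^{\mathcal{B}}(a^*)=a^*$ and hence $a^*\in\mathcal{A}\cap\mathcal{B}$ by Proposition~\ref{prop:simple}, contradicting disjointness.
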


The result is established if \( \Lambda^{\mathcal{A},\mathcal{B}} \) depends on a single variable \( \tau \). However, if it depends on multiple variables, we can apply Proposition \ref{prop:diff} to each of these variables, provided the necessary conditions are met.

\section{Creating k-P2S functions}

The construction of \( k \)-P2S functions for regular convex polytopes \( \mathcal{S} \) is outlined in this section. Property (2) in Definition \ref{def:kp2s} implies that \( E^{\mathcal{S}}(p) \) should be strictly convex, i.e., \( \frac{\partial^2 E^{\mathcal{S}}}{\partial p^2}(p) > 0 \). Our construction begins by defining a function that satisfies all the properties in Definition \ref{def:kp2s}, except for this one, which will hold with \( \geq 0 \) instead of \( > 0 \) (thus ensuring normal convexity rather than strict convexity). We will then explain how to deform this function to preserve the other properties and achieve strict convexity. We will refer to this preliminary function as \textbf{weak k-P2S functions} and use \( e^{\mathcal{S}}(p) \) instead of \( E^{\mathcal{S}}(p) \) to represent them.

\subsection{Creating weak k-P2S functions for regular convex polytopes}

Consider the following definition.
\begin{definition} \label{def:k2ps-base} For $k \geq 2$, a function $\Phi: \mathbb{R} \mapsto \mathbb{R}$ is said to be a \emph{basic} k-P2S function if it has the following properties:
\begin{enumerate}
    \item $\Phi(s) \geq 0$;
    \item $1 > \Phi''(s) > 0$ if $s \geq 0$ and $\Phi''(s) = 0$ for $s \leq 0$;
    \item $\Phi(s)$ is $k$-times differentiable on $s$;
    \item $\Phi(s) = 0$ iff $s \leq 0$. 
\end{enumerate}

$\square$ \end{definition}

A base k-P2S function is ``essentially'' a k-P2S function (as in Definition \ref{def:kp2s}) for the one-dimensional set $(-\infty,0]$, with the difference that this set is non-compact (as the regularity condition in Definition \ref{def:kp2s} requires).

\begin{figure}[b]
\includegraphics[width=8cm, height=3cm]{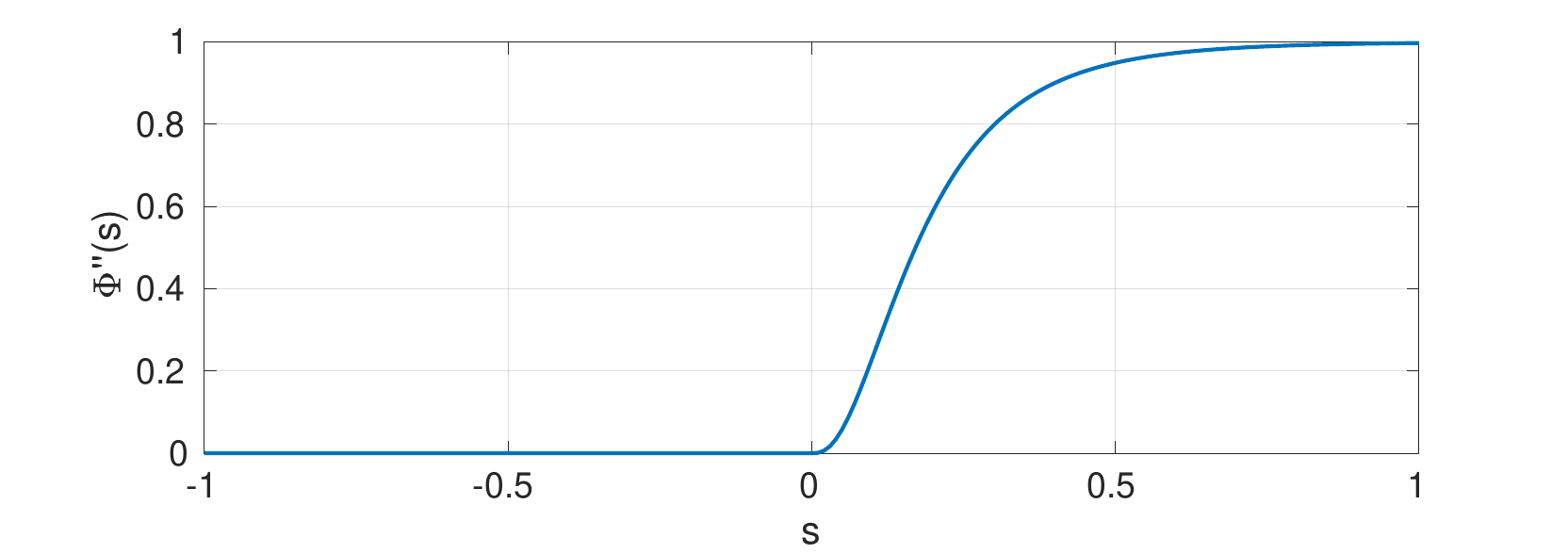}
\caption{The second derivative of $\Phi(s)$ as in \eqref{eq:Phifun}, that is, the integrand in \eqref{eq:Phifun}, for $h=0.1$ and $k=3$.}
\label{fig:figPhi}
\end{figure}

This function will serve as a base (hence the name) for constructing more complex (weak) \(k\)-P2S functions. Obtaining base \(k\)-P2S functions is not difficult. Intuitively, the second derivative of \( \Phi \) must resemble a differentiable version of the Heaviside indicator function that is $k-2$ times differentiable at $s=0$ (see Figure \ref{fig:figPhi}). Hence, one can use, for example:
\begin{eqnarray}
\label{eq:Phifun}
   &&\Phi(s) =  \int_0^s \int_0^\xi  \left( 1 - (r+1)^{-1/h} \right)^{k-1} \, dr \, d\xi =  \nonumber \\
   && \sum_{i=0}^{k-1} \binom{k-1}{i}\frac{(-1)^{k-1-i}}{1-i/h}\left[ \frac{(s+1)^{2-i/h}-1}{2-i/h} - s \right]
\end{eqnarray}

\noindent for \( s \geq 0 \) and \( 0 \) otherwise. Here, \( h \) is a positive smoothing parameter and $k \geq 2$ is an integer. Generally, the smaller is the $h$ and the larger is $k$, the smoother the metric will be. However, the convergence of the algorithm will become slower.

Consider the regular convex polytope \( \mathcal{S} \) as the intersection of half-spaces, i.e., \( \mathcal{S} = \{ p \in \mathbb{R}^n \mid u_i^{\top} p + v_i \leq 0, \, i=1,...,n \} \). We can assume, without loss of generality, that \( \|u_i\| = 1 \). A function that can serve as a weak \( k \)-P2S for \( \mathcal{S} \) is thus:

\begin{equation}
\label{eq:e}
    e^{\mathcal{S}}(p) =  \sum_{i=1}^n W_i\Phi\big(u_i^{\top}p+v_i\big)
\end{equation}

\noindent in which $W_i$ are positive constants. Each property in Definition \ref{def:kp2s} will be studied separately.

\textbf{Property (1):} This easily follows from the fact that \( W_i > 0 \) and Property (1) in Definition \ref{def:k2ps-base}.

\textbf{Property (2):} For Property (2), noting that a \emph{weak} \( k \)-P2S should satisfy \( I_{n \times n} > \frac{\partial^2 e^{\mathcal{S}}}{\partial p^2} \geq 0_{n \times n} \), we observe that the Hessian of \( e^{\mathcal{S}}(p) \) is:

\[
    \frac{\partial^2 e^{\mathcal{S}}}{\partial p^2}(p) =  \sum_{i=1}^n W_i \Phi''\left( u_i^{\top} p + v_i \right) u_i u_i^{\top}
\]

\noindent which is a positive semidefinite matrix because \( \Phi''(u_i^{\top} p + v_i) > 0 \) (see Property (2) of Definition \ref{def:k2ps-base}), \( W_i \) are nonnegative scalars, and each rank-1 matrix \( u_i u_i^{\top} \) is positive semidefinite. Since this is a positive semidefinite matrix, for it to have all its eigenvalues less than 1, it suffices that its spectral norm is less than 1. A simple bound for the spectral norm using the triangle inequality gives:

\[
    \left \| \frac{\partial^2 e^{\mathcal{S}}}{\partial p^2}(p) \right\| \leq  \sum_{i=1}^n W_i \Phi''\left( u_i^{\top} p + v_i \right)
\]

\noindent in which we used the fact that \( \Phi''(s) \geq 0 \) and, since \( \| u_i \| = 1 \), \( \| u_i u_i^{\top} \| = 1 \). Thus, it suffices that the right-hand side is less than 1. Each \( \Phi'' \) is bounded by 1 (see Property (2) of Definition \ref{def:k2ps-base}), so we could choose \( W_i = \frac{1}{n + \epsilon} \) for a small positive number \( \epsilon \), for all \( i \). However, it is beneficial to have the largest \( W_i \) possible: small \( W_i \)'s make the function \( \Phi \) small and will hinder the convergence of the iterative projection algorithm in \eqref{eq:gam}. Therefore, one could select \( W_i = \frac{1}{m + \epsilon} \), where \( m \) is the maximum number of inequalities \( u_i^{\top} p + v_i \) that can be positive simultaneously for all \( p \). The problem of finding the maximum number of linear inequalities that can be positive at the same time can be cast as a mixed linear integer program and solved offline.

\textbf{Property (3):} This follows directly from the fact that \( \Phi \) is \( k \)-times differentiable.

\textbf{Property (4):} For \( e^{\mathcal{S}}(p) = 0 \) if and only if \( p \in \mathcal{S} \), we note that each term \( W_i \Phi(u_i^{\top} p + v_i) \) is nonnegative. Thus, the sum only vanishes when each term vanishes. Since \( W_i > 0 \), this occurs only when all \( \Phi(u_i^{\top} p + v_i) \) vanish. But \( \Phi(s) \) vanishes only when \( s \leq 0 \), which implies that \( u_i^{\top} p + v_i \leq 0 \).

\subsection{Obtaining k-P2S functions from weak k-P2S functions}

To obtain a convex function $E^{\mathcal{S}}(p)$ as a k-P2S function from a weak k-P2S function $e^{\mathcal{S}}(p)$, we use the following result.

\begin{proposition} \label{prop:convexified} Let $\mathcal{S}$ be a regular convex set. Let $\rho: \mathbb{R}^n \mapsto \mathbb{R}$ be a $k$-times differentiable strictly convex function with $\|\frac{\partial^2 \rho}{\partial p^2}\| < 1$ and that is negative when evaluated at any point of $\mathcal{S}$. Furthermore, that $\rho$ has vanishing gradient only if $p \in \mathcal{S}$. Let $e^{\mathcal{S}}$ be a weak k-P2S function for the set $\mathcal{S}$. Then, there exists positive scalars $\varepsilon $ and $\sigma$ such that:

\begin{equation}
\label{eq:Efrome}
    E^{\mathcal{S}}(p) = \varepsilon \rho(p) + \sqrt{ \sigma^2 e^{\mathcal{S}}(p)^2 + \varepsilon ^2 \rho(p)^2}
\end{equation}
\noindent is a k-P2S function for the set $\mathcal{S}$ 
    
\end{proposition}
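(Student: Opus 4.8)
The plan is to verify, for a suitable choice of $\varepsilon,\sigma>0$, the four properties of Definition \ref{def:kp2s} for the function in \eqref{eq:Efrome}. Throughout I abbreviate $f=\varepsilon\rho$ and $g=\sigma e^{\mathcal{S}}$, so that $E^{\mathcal{S}}=f+q$ with $q=\sqrt{f^2+g^2}$, and I write $\nabla,\nabla^2$ for gradient and Hessian in $p$. Properties (1) and (4) are purely algebraic. Since $q\ge|f|\ge -f$ we get $E^{\mathcal{S}}=f+q\ge 0$, which is Property (1). For Property (4), $E^{\mathcal{S}}=0$ forces $q=-f$, i.e. $g=0$ and $f\le 0$; now $g=0\iff e^{\mathcal{S}}=0\iff p\in\mathcal{S}$ by Property (4) of the weak $k$-P2S, and on $\mathcal{S}$ the hypothesis $\rho<0$ makes $f<0$ automatic. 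Conversely, on $\mathcal{S}$ both $g=0$ and $f<0$ hold, so $E^{\mathcal{S}}$ vanishes exactly on $\mathcal{S}$.

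The single observation that drives Property (3) is that $f^2+g^2>0$ everywhere: if it vanished we would need $\rho(p)=0$ and $e^{\mathcal{S}}(p)=0$ simultaneously, but $e^{\mathcal{S}}(p)=0$ places $p\in\mathcal{S}$, where $\rho<0$, a contradiction. Hence the square root is evaluated only at strictly positive arguments and is smooth there, so $E^{\mathcal{S}}$ inherits $k$-fold differentiability from $\rho$ and $e^{\mathcal{S}}$, and the Hessian formula below is valid and continuous on all of $\mathbb{R}^n$.

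For Property (2) I would first record, by a direct computation, the decomposition
\[
\begin{aligned}
\frac{\partial^2 E^{\mathcal{S}}}{\partial p^2} ={}& \Big(1+\tfrac{f}{q}\Big)\,\varepsilon\,\nabla^2\rho + \tfrac{g}{q}\,\sigma\,\nabla^2 e^{\mathcal{S}} \\
&+ \tfrac{1}{q^3}\big(g\nabla f - f\nabla g\big)\big(g\nabla f - f\nabla g\big)^{\top},
\end{aligned}
\]
whose last term is rank-one and positive semidefinite. On $\mathcal{S}$ we have $g=0$ and, since every point of $\mathcal{S}$ is a global minimizer of the nonnegative $e^{\mathcal{S}}$, also $\nabla e^{\mathcal{S}}=0$; thus $g\nabla f-f\nabla g=0$, while $q=-f$ gives $1+f/q=0$ and $g/q=0$, so the whole Hessian vanishes. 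Off $\mathcal{S}$ we have $g\neq 0$, hence $q>|f|\ge -f$ and $1+f/q>0$; as $\varepsilon\nabla^2\rho>0$ by strict convexity and the other two terms are positive semidefinite, the Hessian is positive definite, giving the lower bound of Property (2) for free.

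The main obstacle is the upper bound $\partial^2 E^{\mathcal{S}}/\partial p^2<I$, which is where $\varepsilon,\sigma$ must be chosen small. Writing $u=f/q$, $v=g/q$ on the unit circle, the rank-one term has spectral norm $\|v\nabla f-u\nabla g\|^2/q\le(\varepsilon\|\nabla\rho\|+\sigma\|\nabla e^{\mathcal{S}}\|)^2/q$, while the first two terms have norms at most $2\varepsilon$ and $\sigma$ (using $\|\nabla^2\rho\|<1$ and $\|\nabla^2 e^{\mathcal{S}}\|<1$). Setting $\varepsilon=\sigma=t$ and factoring the scale, every contribution is $O(t)$: in particular the rank-one norm is at most $t\,M(p)$ with $M(p)=(\|\nabla\rho\|+\|\nabla e^{\mathcal{S}}\|)^2/\sqrt{\rho^2+(e^{\mathcal{S}})^2}$. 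It then suffices to show $\sup_p M(p)<\infty$, after which any $t<1/(3+\sup_p M)$ forces $\|\partial^2 E^{\mathcal{S}}/\partial p^2\|<1$, i.e. $\partial^2 E^{\mathcal{S}}/\partial p^2<I$. Establishing this supremum is the real work: $M$ is continuous and finite on compacta because its denominator never vanishes, and I expect the delicate points to be the uniform growth estimate at infinity — where $\rho$ and $e^{\mathcal{S}}$ grow at most quadratically while their gradients grow only linearly (both Hessians bounded), so numerator and denominator of $M$ are of comparable order — and confirming the denominator remains comparable to the numerator as $p$ approaches either of the two disjoint zero sets $\{\rho=0\}$ and $\mathcal{S}=\{e^{\mathcal{S}}=0\}$.
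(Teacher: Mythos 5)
Your handling of Properties (1), (3), (4) and of the positive-(semi)definiteness half of Property (2) is correct and follows the paper's route: the Hessian decomposition you write down is exactly the paper's \eqref{eq:hessianC} (established there as Lemma \ref{lemma:ABsum}), and you even make explicit that the Hessian vanishes on $\mathcal{S}$ (via $\nabla e^{\mathcal{S}}=0$ at interior global minimizers), a point the paper leaves implicit. The genuine gap is the step you yourself flag as ``the real work'': proving $\sup_p M(p)<\infty$ for $M(p)=(\|\nabla\rho\|+\|\nabla e^{\mathcal{S}}\|)^2/\sqrt{\rho^2+(e^{\mathcal{S}})^2}$. Your justification at infinity runs the inequality in the wrong direction: bounded Hessians give an \emph{upper} bound $O(\|p\|^2)$ on the numerator, but what you need is a \emph{lower} bound of the same order on the denominator, and ``$\rho$ and $e^{\mathcal{S}}$ grow at most quadratically'' does not supply one --- a function that is strictly convex in the paper's sense may grow sub-quadratically along some directions, so ``comparable order'' is not automatic. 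As written, the choice of $t$ is therefore not justified.

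The missing ingredient is the Descent Lemma estimate that the paper isolates as Lemma \ref{lemma:convprop}: if $U$ is convex, twice differentiable, $\|\frac{\partial^2 U}{\partial p^2}\|\leq\mu$, and $U_{\textsl{min}}$ is finite, then $\|\frac{\partial U}{\partial p}(p)\|^2\leq 2\mu\,(U(p)-U_{\textsl{min}})$ for all $p$. Applying this to $\rho$ (with $\rho_{\textsl{min}}$ finite and negative) and to $e^{\mathcal{S}}$ (whose minimum is $0$) converts the squared gradient norms in your numerator into the function values themselves, which are then absorbed by the denominator via Cauchy--Schwarz; this is precisely how the paper's Lemma \ref{lemma:bounded} arrives at the uniform bound \eqref{eq:boundhessianC}, whose right-hand side is $O(\varepsilon+\sigma)$ up to the fixed constant $\sqrt{A_{\textsl{min}}^2+B_{\textsl{min}}^2}/V_{\textsl{min}}$. (A further point you and the paper both treat lightly: one needs $\inf_p\sqrt{\rho^2+(e^{\mathcal{S}})^2}>0$, not merely pointwise positivity; this holds because outside any neighborhood of the compact set $\mathcal{S}$ the function $e^{\mathcal{S}}$ is bounded away from zero, while on that neighborhood $\rho$ is.) With Lemma \ref{lemma:convprop} inserted, your argument closes and coincides with the paper's proof.
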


One example of function $\rho$ that satisfies the requirements is  $\rho(p) = 0.5(\|p-p_c\|^2-R^2)$, in which $p_c$ and $R$ are such that the ball $\{ p \in \mathbb{R}^n \ \| \ \|p-p_c\| \leq R\}$ strictly covers $\mathcal{S}$, which is possible because $\mathcal{S}$ is regular and thus compact. It is strictly convex, since its Hessian is $\varepsilon I_{n \times n}$, and it evaluates to a negative number inside the set $\mathcal{S}$, since any point in the set is inside the ball, and thus $\|p-p_c\|^2-R^2 \leq 0$. Furthermore, the gradient of $\rho$ vanishes only at $p=p_c \in \mathcal{S}$.

There does not exist a simple procedure to choose $\varepsilon$ and $\sigma$. The issue is that choosing very small $\sigma$ and $\epsilon$ creates a valid k-P2S function, but with a very small Hessian, and thus the convergence of the generalized alternating algorithm is slow. On the other hand, if they are very large, the condition $I_{n \times n} > \frac{\partial^2 E^{\mathcal{S}}}{\partial p^2}(p) $ (Property (2) in Definition \ref{def:kp2s}) may be violated.  This parameters can be obtained through experimentation: select one and test random points $p$ in $\mathbb{R}^n$ to check if $I_{n \times n} > \frac{\partial^2 E^{\mathcal{S}}}{\partial p^2}(p) $ holds.

Figure \ref{fig:convexified} shows level sets for $e^{\mathcal{S}}(p)$ and the respective function $E^{\mathcal{S}}(p)$ constructed from Lemma \ref{prop:convexified} with a function $\rho(p) =  0.5(\|p-p_c\|^2-R^2$). In this case, $\mathcal{S}$ (the zero level set of both functions) is a pentagon. Note that in $e^{\mathcal{S}}(p)$ the sublevel sets\footnote{The $c$ sublevel set of a function $f: \mathbb{R}^n \mapsto \mathbb{R}$ is $\{p \in \mathbb{R}^n \ | \ f(p) \leq c\}$.} are not strictly convex shapes (having ``flat'' sides), because $e^{\mathcal{S}}(p)$ is not strictly convex. The transformation in \eqref{eq:Efrome} ``bulges'' the sublevel sets (except the zero sublevel set) and transforms them into strictly convex shapes, because $E^{\mathcal{S}}(p)$ is strictly convex.

\section{Experiments}

We will show two experiments to highlights aspects of the proposed methodology. These experiments can be reproduced by running the files in \cite{uaibot_content}. Experimental data is also included in this folder. The installation of the aforementioned software package is necessary.

\begin{figure}[h]
\includegraphics[trim={4cm 0 0 1cm},clip, width=8.8cm, height=3.5cm]{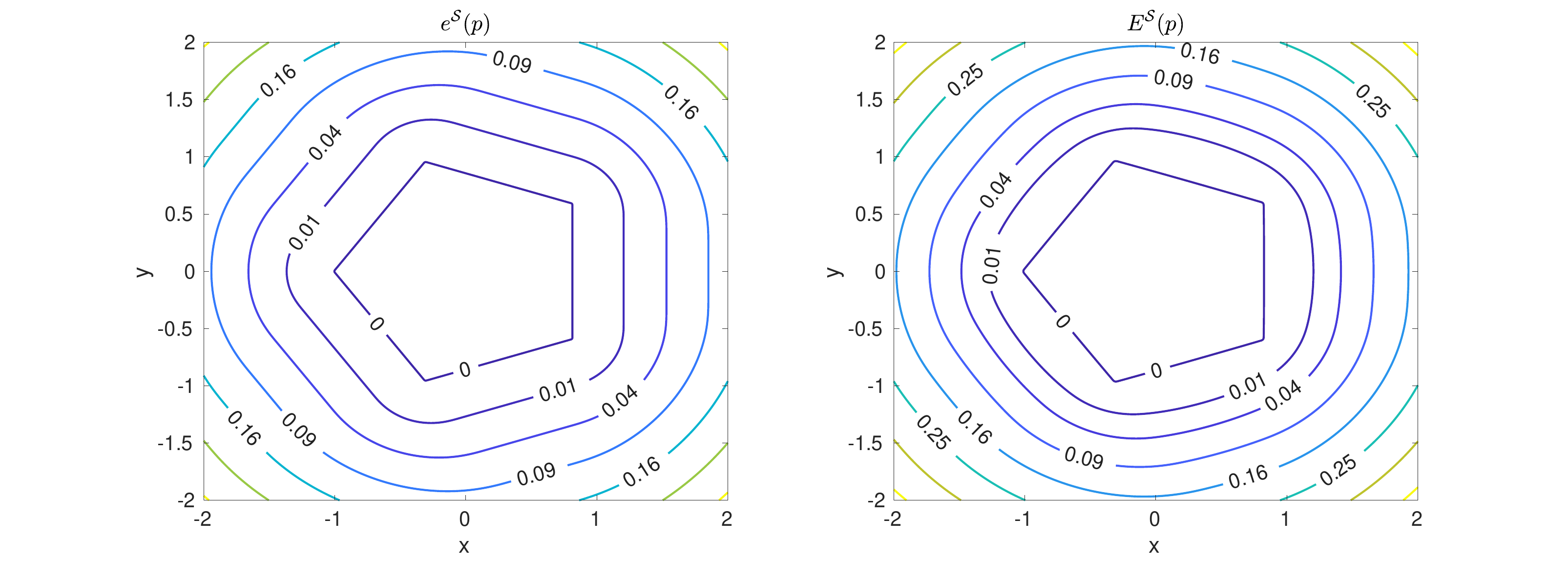}
\caption{Contours of the functions $e^{\mathcal{S}}(p)$ (left) and the strictly convexified $E^{\mathcal{S}}(p)$ (right). In this case, $\mathcal{S}$ is a pentagon (the zero sublevel set of both functions). }
\label{fig:convexified}
\end{figure}

\subsection{Experimental convergence time}
To test the convergence of the generalized alternating algorithm in \eqref{eq:gam}, $50,000$ pairs of random regular convex polytopes, each with 10 inequalities of the form \( u_i^\top p + v_i \leq 0 \), were randomly generated. We considered only those pairs where the true Euclidean distance between them was at least 5 cm. 

Next, the \( k \)-P2S functions were constructed using \( \Phi \) from \eqref{eq:Phifun} with \( k=2 \), \( h=0.1 \), \( e^{\mathcal{S}}(p) \) as defined in \eqref{eq:e} with \( W_i = 1/6 \), and finally \( E^{\mathcal{S}}(p) \) as in \eqref{eq:Efrome} with  \( \varepsilon =0.01\), \( \sigma = 0.989 \), and \( \rho(p) = \|p - p_c\| - R^2 \), where \( p_c \) and \( R \) are chosen such that the ball covers the polytope. 

The iterative algorithm was initialized with \( a[0] \) as the true closest point (in the Euclidean sense) from \( \mathcal{A} \) to \( \mathcal{B} \), computed using GJK's algorithm \cite{gilbert2002fast}. Convergence was considered achieved when \( \|a[k+1] - a[k]\| < 10^{-3} \). The results for the mean computational time (in ms) and the number of iterations are shown in Figure \ref{fig:histogram}. The computational times include the time required to run GJK's algorithm. The computer used was an Intel i7 with a clock speed of 2.30 GHz. The mean computational time was $0.21ms$, with the maximum being $0.67ms$. The mean number of iterations was $27.25$, and the maximum $894$.

\begin{figure}[h]
\includegraphics[trim={0cm 0 0 0},clip, width=8.55cm, height=5.4cm]{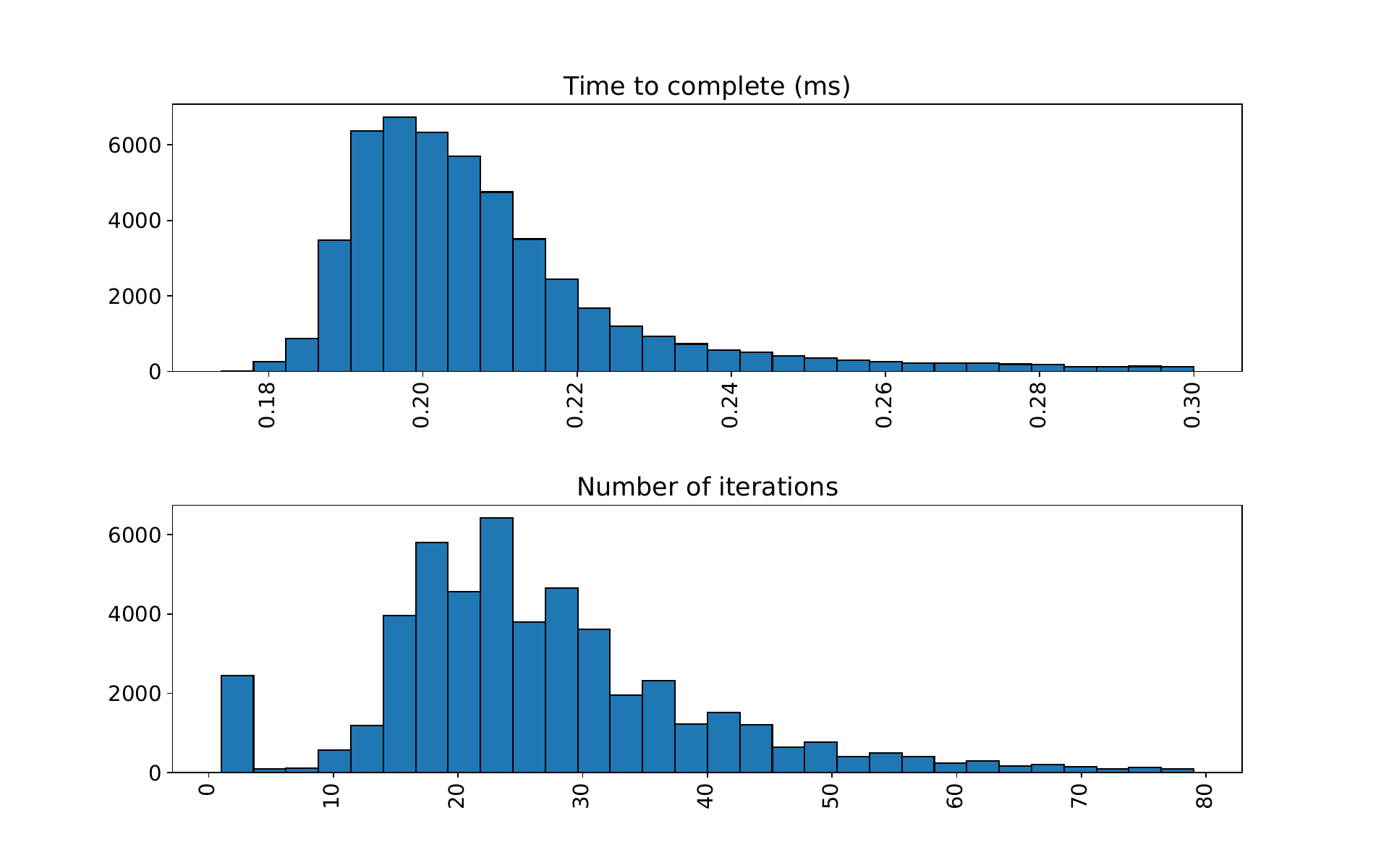}
\caption{Results for the computational experiment. This image can be generated by running the file \texttt{figure\_3.py} in \cite{uaibot_content}.}
\label{fig:histogram}
\end{figure}

\subsection{Experiments with the robot}

To showcase the benefits of using a differentiable distance metric—particularly the one proposed in this paper—we conducted an experiment in which a Franka Emika Panda robot was tasked with reaching a target pose with its end-effector while avoiding collision with a box, self-collision, and respecting joint limits. We implemented a CBF-based controller \cite{ames2019cbf} using both our proposed distance metric and the (half-squared) Euclidean distance, applied to both robot-to-obstacle and self-collision avoidance. In both cases, the robot was controlled via joint velocities, and one CBF inequality was implemented for each pair of objects \((a, b)\), where \(a\) belongs to the robot and \(b\) belongs to the environment (in this case, the obstacle box and the box on which it rests). Figure \ref{fig:expsetup} illustrates the experimental setup through snapshots.

\begin{figure}[h!]
\includegraphics[trim={-5cm 0 0 0},clip, width=8.5cm, height=5.4cm]{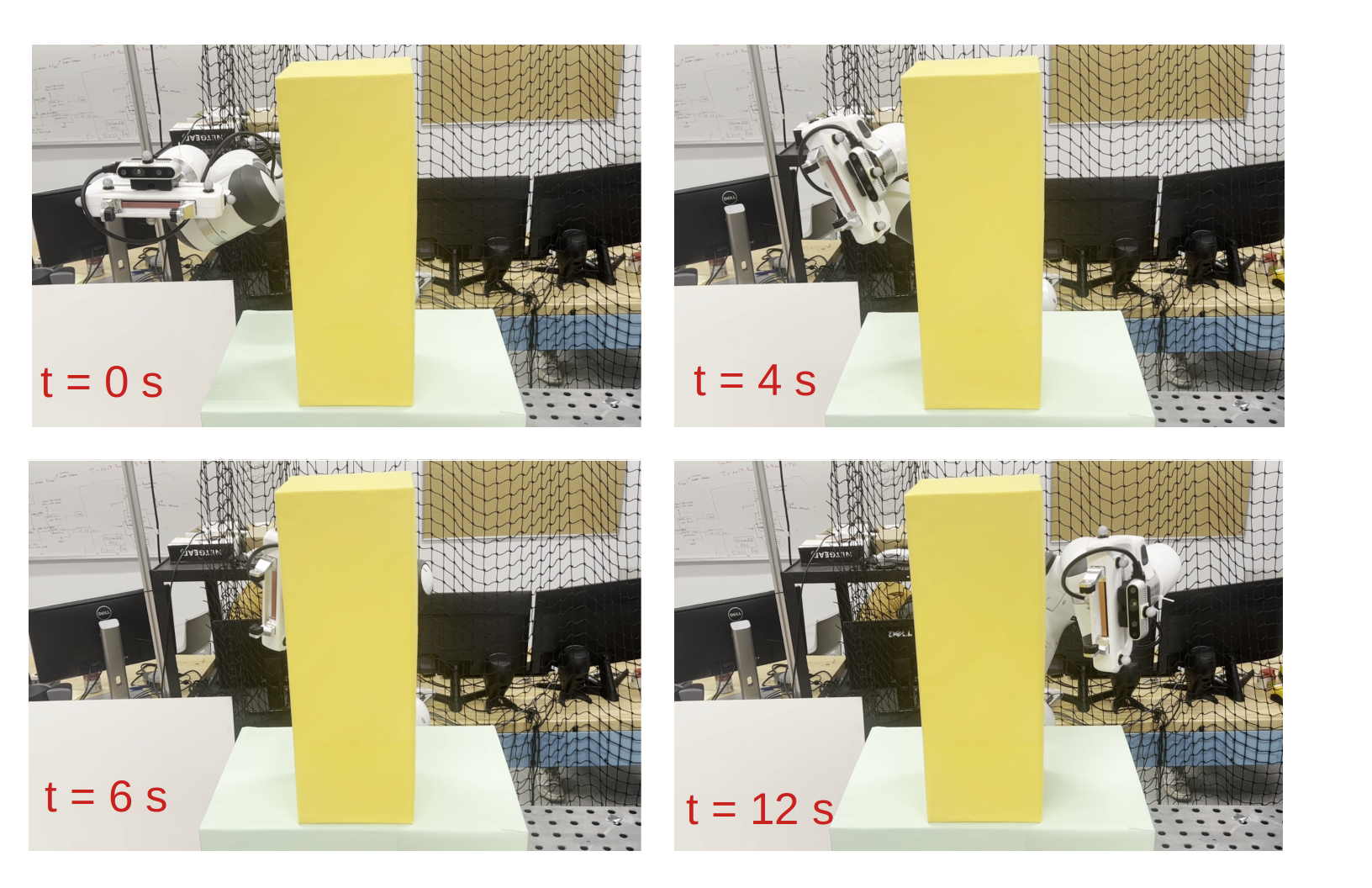}
\caption{Experiment of a Franka Emika 3 reaching a target while avoiding a box. }
\label{fig:expsetup}
\end{figure}

This setup highlights the need for differentiability:  when the gripper of the Franka Emika robot, modeled as a box, becomes parallel to the faces of the obstacle box, the Euclidean distance function exhibits non-differentiability. Specifically, when using the Euclidean distance, we observe that during the portion of the motion where the gripper is nearly parallel to the obstacle, the gradient of the distance (with respect to the robot's configuration) becomes discontinuous. This, in turn, causes abrupt changes in the control input. In contrast, this issue does not arise when using our proposed metric.

Figure \ref{fig:expdata}-top illustrates this phenomenon through the control input for the fifth joint. The chattering behavior is especially noticeable between \(t = 3\,\text{s}\) and \(t = 5\,\text{s}\). Figure \ref{fig:expdata}-bottom shows the Euclidean distance between the gripper and the yellow box within the same critical time window, where clear ``spikes'' caused by non-differentiability can be seen. The same plot also displays the distance produced by our proposed metric, which remains smooth—explaining the smooth behavior of the corresponding control input. The accompanying video provides a more in-depth explanation of this experiment, along with several additional experiments.

\section{Conclusion}
In this paper, we build upon the work \cite{GoncalvesSmoothDistances} and propose a new differentiable distance metric between objects. Compared to the approach in \cite{GoncalvesSmoothDistances}, our method is significantly simpler to implement—particularly for general convex shapes represented by half-spaces, for example. Moreover, it possesses the important feature of vanishing when the objects overlap, which was absent in \cite{GoncalvesSmoothDistances}. We also provide formal proofs of convergence and other theoretical results, which we believe are of independent theoretical interest.

\begin{figure}
\includegraphics[trim={-4cm 0 0 0},clip, width=8cm, height=5.4cm]{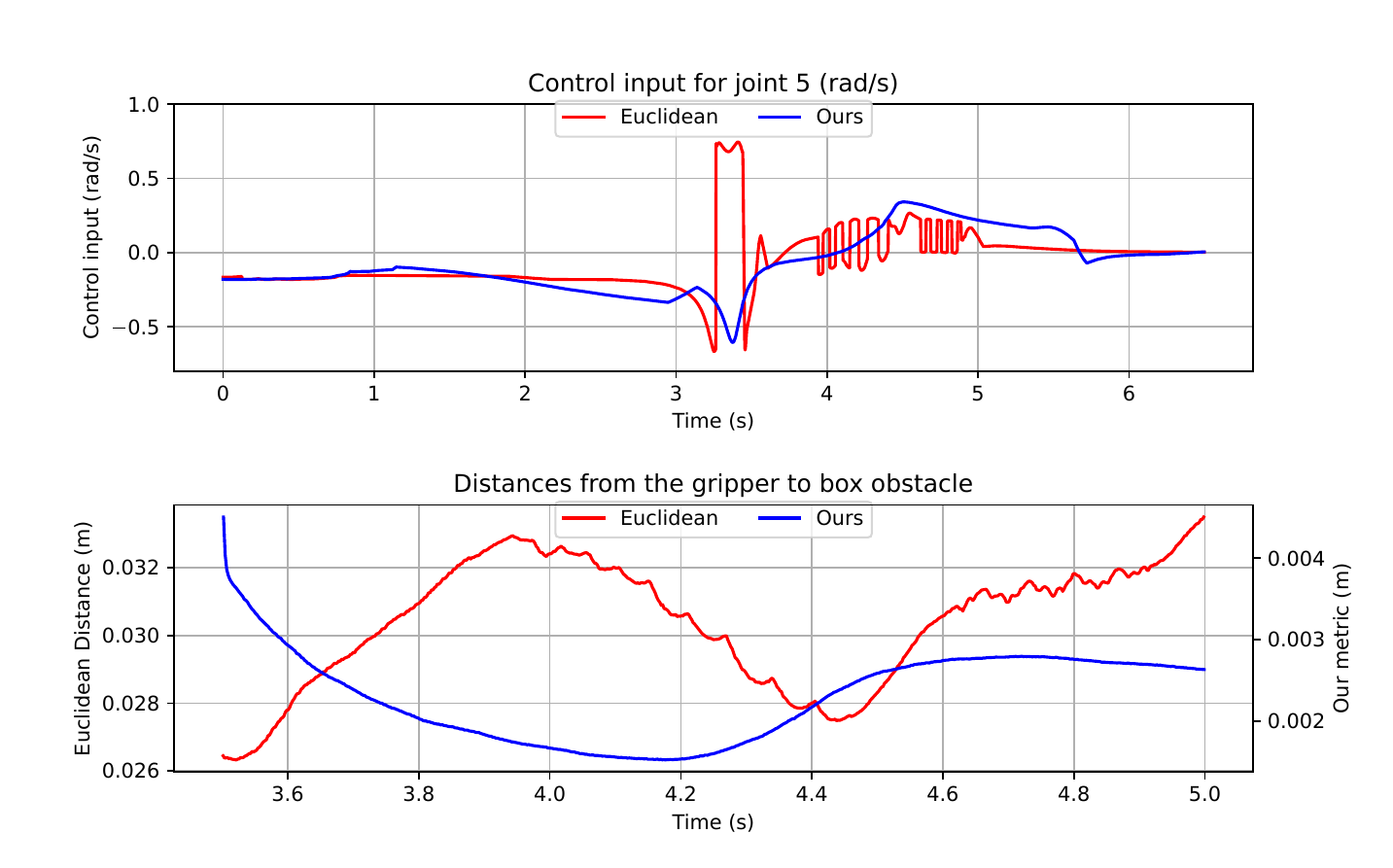}
\caption{Experiment of a Franka Emika 3 avoiding a box. This image can be generated by running the file \texttt{figure\_5.py} in \cite{uaibot_content}.}
\label{fig:expdata}
\end{figure}

\appendix

\textbf{Proof of Proposition \ref{prop:simple}}: from Property (1) in Definition \ref{def:kp2s}, $E^{\mathcal{S}}(p) \geq 0$. From Property (4) in Definition \ref{def:kp2s}, the global minimum of this function is achieved iff $p \in \mathcal{S}$. From Property (2) in Definition \ref{def:kp2s}, $E^{\mathcal{S}}(p)$ is a convex function. Since convex functions only have global minima, and the function is differentiable, the gradient vanishes if and only $p$ lies in this global minima, that is, if $p \in \mathcal{S}$. With this property and the definition of $\Pi^{\mathcal{S}}(p) = p - \frac{\partial E^{\mathcal{S}}}{\partial p}(p)$, it is easy to see that $\Pi^{\mathcal{S}}(p) = p$ iff $p \in \mathcal{S}$. $\square$

\begin{lemma} \label{lemma:cont} Consider the map $F(a) \triangleq \Pi^{\mathcal{A}}\big(\Pi^{\mathcal{B}}(a)\big)$ coming from two k-P2S functions. Then it is locally contractible if $a \not \in \mathcal{A} \cap \mathcal{B} $.
\end{lemma}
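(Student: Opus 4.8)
The plan is to reduce local contractibility to a pointwise spectral-norm bound on the Jacobian of $F$, and then to control that Jacobian entirely through Property (2) of Definition \ref{def:kp2s}. First I would differentiate a single generalized projection. Since $\Pi^{\mathcal{S}}(p) = p - \frac{\partial E^{\mathcal{S}}}{\partial p}(p)$, its Jacobian is
\[
\frac{\partial \Pi^{\mathcal{S}}}{\partial p}(p) = I_{n \times n} - \frac{\partial^2 E^{\mathcal{S}}}{\partial p^2}(p).
\]
Property (2) then produces a sharp dichotomy: when $p \not\in \mathcal{S}$ we have $0_{n \times n} < \frac{\partial^2 E^{\mathcal{S}}}{\partial p^2}(p) < I_{n \times n}$, so the Jacobian is symmetric with every eigenvalue strictly in $(0,1)$, giving $\big\|\frac{\partial \Pi^{\mathcal{S}}}{\partial p}(p)\big\| < 1$; when $p \in \mathcal{S}$ the Hessian vanishes, the Jacobian equals $I_{n \times n}$, and its spectral norm is exactly $1$. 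The upshot is that each projection is non-expansive everywhere and strictly contractive precisely off its own set.

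Next I would apply the chain rule to the composition. Writing $b = \Pi^{\mathcal{B}}(a)$,
\[
\frac{\partial F}{\partial a}(a) = \frac{\partial \Pi^{\mathcal{A}}}{\partial p}(b)\,\frac{\partial \Pi^{\mathcal{B}}}{\partial a}(a),
\]
so submultiplicativity of the spectral norm yields $\big\|\frac{\partial F}{\partial a}(a)\big\| \le \big\|\frac{\partial \Pi^{\mathcal{A}}}{\partial p}(b)\big\|\,\big\|\frac{\partial \Pi^{\mathcal{B}}}{\partial a}(a)\big\|$. Because each factor is at most $1$ by the previous step, it now suffices to guarantee that at least one factor is strictly below $1$ whenever $a \not\in \mathcal{A} \cap \mathcal{B}$.

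The case analysis closes the argument. If $a \not\in \mathcal{B}$, the second factor is already $< 1$ while the first is $\le 1$, so the product is $< 1$. The delicate case is $a \in \mathcal{B}$: here $a \not\in \mathcal{A} \cap \mathcal{B}$ forces $a \not\in \mathcal{A}$, yet the second factor is now exactly $1$, so strictness must come from the first factor. This is where Proposition \ref{prop:simple} is essential: since $a \in \mathcal{B}$, it gives $b = \Pi^{\mathcal{B}}(a) = a$, so the first factor is evaluated at $a \not\in \mathcal{A}$, making $\big\|\frac{\partial \Pi^{\mathcal{A}}}{\partial p}(b)\big\| < 1$ and the product again $< 1$. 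Hence $\big\|\frac{\partial F}{\partial a}(a)\big\| < 1$ for every $a \not\in \mathcal{A} \cap \mathcal{B}$, which is the claimed contractibility (a uniform constant on any compact region avoiding $\mathcal{A} \cap \mathcal{B}$ then follows by continuity).

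I expect the main obstacle to be exactly this boundary case $a \in \mathcal{B}\setminus\mathcal{A}$ (and, symmetrically, the configuration where one of the projection Jacobians is the identity): there, naive submultiplicativity only delivers the non-strict bound $\le 1$, and the product of two non-expansive maps need not be contractive. The fixed-point identity $\Pi^{\mathcal{B}}(a) = a$ supplied by Proposition \ref{prop:simple} is the key device that resolves this, converting the product of a non-expansive factor and a strictly contractive factor into a strict contraction by relocating the evaluation point of the $\mathcal{A}$-projection onto $a$ itself.
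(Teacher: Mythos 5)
Your proposal is correct and follows essentially the same route as the paper's proof: chain rule on the composition, submultiplicativity of the spectral norm, and Property (2) of Definition \ref{def:kp2s} to place the eigenvalues of each symmetric Jacobian $I_{n\times n}-\frac{\partial^2 E^{\mathcal{S}}}{\partial p^2}$ in $(0,1)$ off the set and at $1$ on it. Your explicit treatment of the case $a\in\mathcal{B}\setminus\mathcal{A}$ via $\Pi^{\mathcal{B}}(a)=a$ (Proposition \ref{prop:simple}) actually spells out a step the paper leaves implicit, since the paper only verifies its disjunctive condition under the stronger hypothesis $\mathcal{A}\cap\mathcal{B}=\emptyset$.
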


\begin{proof} The Jacobian of the  map $F(a) = \Pi^{\mathcal{A}}\big(\Pi^{\mathcal{B}}(a)\big)$ is:

\begin{equation}
    \frac{\partial F}{\partial a}(a) = \frac{\partial \Pi^{\mathcal{B}}}{\partial p}(a)  \frac{\partial \Pi^{\mathcal{A}}}{\partial p}\big(\Pi^{\mathcal{B}}(a)\big).
\end{equation}

Taking the spectral norm $\|\cdot\|$ of both sides and using the submultiplicativity of the spectral norm, we obtain

\begin{equation}
\label{eq:boundeq}
     \left\|\frac{\partial F}{\partial a}(a)\right\| \leq \left\|\frac{\partial \Pi^{\mathcal{B}}}{\partial p}(a) \right\| \left\| \frac{\partial \Pi^{\mathcal{A}}}{\partial p}\big(\Pi^{\mathcal{B}}(a)\big)\right\|.   
\end{equation}


The Jacobian of the map $\Pi^{\mathcal{A}}(p)$ is $I - \frac{\partial^2 E^{\mathcal{A}}}{\partial p^2}(p)$, which is a symmetric matrix and hence has a spectral norm equal to its largest eigenvalue in absolute value. According to Property (2) in Definition \ref{def:kp2s},  this matrix has eigenvalues in the open interval $(0,1)$ if $p \not \in \mathcal{A}$ and eigenvalue $1$ if  $p \in \mathcal{A}$. The same is true, analogously, for $\Pi^{\mathcal{B}}$. So, as long as either $\Pi^{\mathcal{B}}(a) \not \in \mathcal{A}$ or $a \not \in \mathcal{B}$, which is always true if $\mathcal{A} \cap \mathcal{B} = \emptyset$, from \eqref{eq:boundeq} it follows that $\left\|\frac{\partial F}{\partial a}(a)\right\|$ is strictly less than $1$ and the map is guaranteed to be locally contractible.

\end{proof}

\textbf{Proof of Proposition \ref{prop:gan}}:
Lemma \ref{lemma:cont} shows that if $\mathcal{A} \cap \mathcal{B} = \emptyset$, the map $F$ is always locally contractible. This implies global contractivity (\cite{contmap}).  Thus, Banach's fixed point theorem \cite{banach} guarantees that the sequence in \eqref{eq:gam} converges, and that the limit point is unique.
$\square$

\begin{lemma} \label{lemma:inv} Let $\mathcal{S}$ be a regular convex set. Then the function $\Pi^{\mathcal{S}}: \mathbb{R}^n \mapsto \mathbb{R}^n$ is injective, that is, if the equation $\Pi^{\mathcal{S}}(p) = q$ has a solution \(p\) for a given \(q\), it has to be unique.
\end{lemma}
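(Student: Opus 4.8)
The plan is to recognize $\Pi^{\mathcal{S}}$ as the gradient of a strictly convex scalar potential, and then appeal to the strict monotonicity that such gradients enjoy. Define $\psi(p) \triangleq \tfrac{1}{2}\|p\|^2 - E^{\mathcal{S}}(p)$. Since $E^{\mathcal{S}}$ is $k$-times differentiable with $k \geq 2$, $\psi$ is at least twice differentiable, and $\frac{\partial \psi}{\partial p}(p) = p - \frac{\partial E^{\mathcal{S}}}{\partial p}(p) = \Pi^{\mathcal{S}}(p)$; that is, $\Pi^{\mathcal{S}}$ is exactly $\nabla \psi$. Its Jacobian coincides with the Hessian of $\psi$, namely $\frac{\partial^2 \psi}{\partial p^2}(p) = I_{n \times n} - \frac{\partial^2 E^{\mathcal{S}}}{\partial p^2}(p)$.

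The first step is to show this matrix is positive definite for \emph{every} $p$, i.e. that $\psi$ is strictly convex on all of $\mathbb{R}^n$. This follows from Property (2) of Definition \ref{def:kp2s}, split into two cases. If $p \not\in \mathcal{S}$, then $\frac{\partial^2 E^{\mathcal{S}}}{\partial p^2}(p) < I_{n \times n}$, so that $I_{n \times n} - \frac{\partial^2 E^{\mathcal{S}}}{\partial p^2}(p) > 0_{n \times n}$. If $p \in \mathcal{S}$, then $\frac{\partial^2 E^{\mathcal{S}}}{\partial p^2}(p) = 0_{n \times n}$, and the Hessian equals $I_{n \times n} > 0_{n \times n}$. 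Hence $\frac{\partial^2 \psi}{\partial p^2}(p) > 0_{n \times n}$ in both regimes.

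The final step converts strict convexity into injectivity. Suppose $\Pi^{\mathcal{S}}(p_1) = \Pi^{\mathcal{S}}(p_2)$ and set $d = p_1 - p_2$, assuming for contradiction $d \neq 0$. Applying the fundamental theorem of calculus to $t \mapsto \Pi^{\mathcal{S}}(p_2 + t d)$ and using $\frac{\partial \Pi^{\mathcal{S}}}{\partial p} = \frac{\partial^2 \psi}{\partial p^2}$, we obtain
\begin{equation}
0 = d^{\top}\big(\Pi^{\mathcal{S}}(p_1) - \Pi^{\mathcal{S}}(p_2)\big) = \int_0^1 d^{\top} \frac{\partial^2 \psi}{\partial p^2}(p_2 + t d)\, d \; dt .
\end{equation}
Since the Hessian is positive definite everywhere and $d \neq 0$, the integrand $d^{\top} \frac{\partial^2 \psi}{\partial p^2}(p_2 + t d)\, d$ is strictly positive for every $t \in [0,1]$, so the integral is strictly positive, contradicting the value $0$. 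Therefore $p_1 = p_2$ and $\Pi^{\mathcal{S}}$ is injective.

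I expect the main obstacle to be the first step: establishing positive definiteness uniformly over all $p$. The subtlety is that Property (2) is stated in two separate regimes, and one must verify that the boundary/interior case $p \in \mathcal{S}$ — where $\frac{\partial^2 E^{\mathcal{S}}}{\partial p^2}$ degenerates to $0_{n \times n}$ and thus $E^{\mathcal{S}}$ is only weakly convex there — still yields a strictly positive Hessian for $\psi$ (which it does, precisely because the $\tfrac{1}{2}\|p\|^2$ term contributes $I_{n \times n}$). Once uniform positive definiteness is secured, the monotonicity integral is routine.
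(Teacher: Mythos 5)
Your proof is correct and takes essentially the same route as the paper's: both recognize $\Pi^{\mathcal{S}}$ as the gradient of the potential $\|p\|^2/2 - E^{\mathcal{S}}(p)$ and argue that this potential is strictly convex everywhere via Property (2) of Definition \ref{def:kp2s}. The only difference is that the paper concludes by citing \cite{Griewank1991} for the injectivity of gradients of strictly convex functions, whereas you carry out the standard strict-monotonicity line-integral argument explicitly, which is a harmless and more self-contained substitution.
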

\begin{proof}
    $\Pi^{\mathcal{S}}$ is the gradient of the function $\pi^{\mathcal{S}}(p) = \|p\|^2/2 - E^{\mathcal{S}}(p)$ which, according to Property (2) in Definition \ref{def:kp2s}, is a differentiable strictly convex function, and thus it is an injective function (see \cite{Griewank1991}). 
\end{proof}
\begin{lemma} \label{lemma:gt} Let $\mathcal{A}, \mathcal{B}$ be two regular convex sets in which $\mathcal{A} \cap \mathcal{B} = \emptyset$. If $E^{\mathcal{A}}$ and $E^{\mathcal{B}}$ are two k-P2S functions for two regular convex sets, we have:
\begin{equation}
    \Lambda^{\mathcal{A},\mathcal{B}} = \min_a \max_b \left(  E^{\mathcal{A}}(b)+E^{\mathcal{B}}(a)-\frac{\|a-b\|^2}{2} \right)
\end{equation}
\noindent in which the optimizer $b$ for a given $a$, $b^{\circ}(a)$,  is such that $\Pi^{\mathcal{A}}(b^{\circ}) = a$, whereas the outer optimizer for $a$, $a^{\circ}$ is such that $\Pi^{\mathcal{B}}(a^{\circ}) = b^{\circ}$.
\end{lemma}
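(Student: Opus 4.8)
The plan is to treat the quantity inside the optimization, $L(a,b) \triangleq E^{\mathcal{A}}(b) + E^{\mathcal{B}}(a) - \tfrac12\|a-b\|^2$, as the payoff of a two-player game and to identify the stationary point of the min-max with the differentiable witness points $(a^*,b^*)$ through first-order conditions. First I would analyze the inner problem $\max_b L(a,b)$ for fixed $a$. Since $E^{\mathcal{B}}(a)$ is constant in $b$, only $g(b) = E^{\mathcal{A}}(b) - \tfrac12\|a-b\|^2$ matters, and its Hessian in $b$ equals $\frac{\partial^2 E^{\mathcal{A}}}{\partial b^2}(b) - I_{n \times n}$, which is negative definite for every $b$ by Property (2) of Definition \ref{def:kp2s} (the Hessian of $E^{\mathcal{A}}$ is $< I_{n \times n}$ whether or not $b \in \mathcal{A}$). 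Hence $g$ is strictly concave, its maximizer is unique when it exists, and the stationarity condition $\frac{\partial E^{\mathcal{A}}}{\partial b}(b) + a - b = 0$ rearranges exactly to $\Pi^{\mathcal{A}}(b^{\circ}) = a$, which is the inner optimizer relation claimed.

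Next I would pass to the outer problem $\min_a \phi(a)$ with $\phi(a) = L(a, b^{\circ}(a))$. Because $b^{\circ}(a)$ is the unique inner maximizer, the envelope (Danskin) theorem gives $\phi'(a) = \frac{\partial E^{\mathcal{B}}}{\partial a}(a) - (a - b^{\circ}(a))$, whose vanishing rearranges to $\Pi^{\mathcal{B}}(a^{\circ}) = b^{\circ}$, the second claimed relation. To confirm this stationary point is a genuine minimum I would show $\phi$ is convex: differentiating the inner identity $\Pi^{\mathcal{A}}(b^{\circ}(a)) = a$ yields $\frac{\partial b^{\circ}}{\partial a} = (I_{n \times n} - \frac{\partial^2 E^{\mathcal{A}}}{\partial b^2}(b^{\circ}))^{-1}$, so that $\phi''(a) = \frac{\partial^2 E^{\mathcal{B}}}{\partial a^2}(a) - I_{n \times n} + (I_{n \times n} - \frac{\partial^2 E^{\mathcal{A}}}{\partial b^2}(b^{\circ}))^{-1}$. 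Since both Hessians lie in $[0, I_{n \times n})$, the inverse term is $\geq I_{n \times n}$ and the first Hessian is $\geq 0$, so $\phi'' \geq 0$ and any stationary point is the global minimum.

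Finally I would close the loop. Combining $\Pi^{\mathcal{A}}(b^{\circ}) = a^{\circ}$ with $b^{\circ} = \Pi^{\mathcal{B}}(a^{\circ})$ shows $\Pi^{\mathcal{A}}(\Pi^{\mathcal{B}}(a^{\circ})) = a^{\circ}$, i.e. $a^{\circ}$ is a fixed point of the map $F$ of Lemma \ref{lemma:cont}. By the uniqueness in Proposition \ref{prop:gan} this forces $a^{\circ} = a^*$ and $b^{\circ} = \Pi^{\mathcal{B}}(a^*) = b^*$, and substituting back gives the optimal value $\phi(a^*) = L(a^*, b^*) = \Lambda^{\mathcal{A},\mathcal{B}}$ by Definition \ref{def:ks2s}.

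The step I expect to be the main obstacle is the existence of the inner maximizer: strict concavity secures uniqueness but not existence, because $\frac{\partial^2 E^{\mathcal{A}}}{\partial b^2}$ is not bounded away from $I_{n \times n}$ uniformly, so $\Pi^{\mathcal{A}}$ need not be surjective and $g$ may fail to attain its supremum for some $a$. The clean way around this is to observe that $\max_b g = -\tfrac12\|a\|^2 + (\pi^{\mathcal{A}})^*(a)$, the Fenchel conjugate of the strictly convex $\pi^{\mathcal{A}}(b) = \tfrac12\|b\|^2 - E^{\mathcal{A}}(b)$ from Lemma \ref{lemma:inv}, which is automatically well defined (possibly $+\infty$) and convex; the inner maximum is attained precisely on the open range of $\Pi^{\mathcal{A}}$, which by the inverse function theorem contains $a^*$. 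Convexity of $\phi$ on this open set then guarantees that the interior stationary point $a^*$ is the global minimizer over all of $\mathbb{R}^n$, so restricting to the region where the supremum is attained costs nothing.
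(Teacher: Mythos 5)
Your proposal is correct and follows essentially the same route as the paper's proof: the same game-theoretic min--max reading of $\Lambda(a,b)$, the same first-order conditions yielding $\Pi^{\mathcal{A}}(b^{\circ})=a$ and $\Pi^{\mathcal{B}}(a^{\circ})=b^{\circ}$, the same Hessian computations (your $\phi''$ matches the paper's $\frac{\partial^2 E^{\mathcal{B}}}{\partial p^2}(a^{\circ})-I_{n\times n}-\bigl(\frac{\partial^2 E^{\mathcal{A}}}{\partial p^2}(b^{\circ})-I_{n\times n}\bigr)^{-1}$), and the same identification of $a^{\circ}$ with the unique fixed point $a^*$ via Proposition \ref{prop:gan}. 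Your closing remark on attainment of the inner supremum via the Fenchel conjugate of $\pi^{\mathcal{A}}$ is a more careful treatment of a step the paper handles only informally (by letting player $B$ ``earn an infinite objective''), but it does not change the overall argument.
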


\begin{proof}
 This is a game-theoretic interpretation of $\Lambda^{\mathcal{A},\mathcal{B}}$, in which we have a two-player competitive game between player $A$ (minimizer) and player $B$ (maximizer).  Let $\Lambda(a,b) \triangleq E^{\mathcal{A}}(b)+E^{\mathcal{B}}(a)-\frac{\|a-b\|^2}{2}$ and $\Lambda^{\mathcal{B}}(a) \triangleq \max_b \Lambda(a,b)$. We start with player $B$, by maximizing   $\Lambda(a,b)$ over $b$. By differentiating and setting to $0$, we find that the optimal strategy $b^{\circ}(a)$ for player $B$ in accordance to $A$'s choice $a$ is to have $\Pi^{\mathcal{A}}(b^{\circ}(a)) = a$ if this equation has a solution ($a$ is in $\Pi^{\mathcal{A}}$'s range), or set $b^{\circ}(a)$ to an infinite vector following the direction given by the gradient of $\Lambda$ in $b$, namely $a - \Pi^{\mathcal{A}}(b)$, forever to earn an infinite objective function $\Lambda$ ($B$'s gain is unlimited). Since $A$ wants to minimize $\Lambda$, we can assume it is going to choose $a$ such that $\Pi^{\mathcal{A}}(b^{\circ}(a)) = a$ has a solution to prevent this. Furthermore, this equation will have an  unique  solution, that can be found by inverting $\Pi^{\mathcal{A}}$ (see Lemma \ref{lemma:inv}). Finally, this strategy is indeed the maximizer because the Hessian of   $\Lambda(a,b)$  in $b$ is $\frac{\partial^2 E^{\mathcal{A}}}{\partial p^2}(b^{\circ})-I_{n \times n}$, which,    according to Property (2) of Definition \ref{def:kp2s} is negative definite.

 The game proceeds with player $A$'s turn. From the previous player action, $\Lambda^{\mathcal{B}}(a) = \max_b \Lambda(a,b) = \Lambda(a,b^{\circ}(a))$. Differentiating in $a$ and setting to $0$, and using the fact that $\Pi^{\mathcal{A}}(b^{\circ}(a)) = a$ (which is $B$'s strategy), we obtain that the optimal strategy $a^{\circ}$ to $A$ is to have $\Pi^{\mathcal{B}}(a^{\circ}) = b^{\circ}(a^{\circ})$. This also always has a solution $a^{\circ}$. Indeed, applying $\Pi^{\mathcal{A}}$ on both sides (which is an injective mapping, see Lemma \ref{lemma:inv}, and thus this is an ``if and only if'' step) and using the fact that $\Pi^{\mathcal{A}}(b^{\circ}) = a^{\circ}$ (from $B$'s strategy), we have $\Pi^{\mathcal{A}}(\Pi^{\mathcal{B}}(a^{\circ})) = \Pi^{\mathcal{A}}(b^{\circ}(a^{\circ})) = a^{\circ}$. This is the fixed point equation for which Proposition \ref{prop:gan} ensures that the iterative algorithm in \eqref{eq:gam} will converge to its unique solution $a^*$ since $\mathcal{A} \cap \mathcal{B} = \emptyset$. This shows that the optimizer for both $A$ and $B$ are the differentiable witness points $a^*$ and $b^* = \Pi^{\mathcal{B}}(a^*)$, in accordance to Definition \ref{def:smowp}.
 
 Finally, we need to show that $A$'s strategy is indeed a minimizer. This holds because the Hessian of $\Lambda^{\mathcal{B}}(a)$ is
 \begin{equation}
      \frac{\partial^2 E^{\mathcal{B}}}{\partial p^2}(a^{\circ})-I_{n \times n} - \Big(\frac{\partial^2 E^{\mathcal{A}}}{\partial p^2}(b^{\circ}) - I_{n \times n}\Big)^{-1}
 \end{equation}
 \noindent which is positive definite: $\frac{\partial^2 E^{\mathcal{B}}}{\partial p^2}(a^{\circ})$ is positive definite (Property (2) of Definition \ref{def:kp2s}) and 
 $-I_{n \times n} - \Big(\frac{\partial^2 E^{\mathcal{A}}}{\partial p^2}(b^{\circ})-I_{n \times n}\Big)^{-1}$ is as well. To see why for the latter, from Property (2) of Definition \ref{def:kp2s}, the eigenvalues of $H = \frac{\partial^2 E^{\mathcal{A}}}{\partial p^2}(b^{\circ})$ lie on $(0,1)$. Let $f(h) = -1-1/(h-1) = h/(1-h)$. Thus, the eigenvalues of $W = f(H) = -I_{n \times n} -(H-I_{n \times n})^{-1}$ lie on the range $(0,\infty)$ .
 \end{proof}
\begin{lemma} \label{lemma:ineq} Let $\mathcal{A}, \mathcal{B}$ be two regular convex sets in which $\mathcal{A} \cap \mathcal{B} = \emptyset$. Let $(a_0^*,b_0^*)$ be any pair of points that 
achieves the smallest (true) Euclidean distance between them. Then:
\begin{equation}
    E^{\mathcal{B}}(a_0^*) \geq \Lambda^{\mathcal{A},\mathcal{B}} \geq \min_a \Lambda(a,a)
\end{equation}
\noindent in which we borrowed the definition of $\Lambda(a,b)$ from the proof of Lemma \ref{lemma:gt}.
\end{lemma}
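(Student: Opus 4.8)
The plan is to build everything on the min--max representation $\Lambda^{\mathcal{A},\mathcal{B}} = \min_a \max_b \Lambda(a,b)$ established in Lemma \ref{lemma:gt}, and to prove the two inequalities separately. Throughout I use the convention (consistent with the witness points) that $a_0^* \in \mathcal{A}$ and $b_0^* \in \mathcal{B}$.

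For the lower bound, I would observe that for every fixed $a$ the choice $b=a$ is admissible in the inner maximization, so $\max_b \Lambda(a,b) \geq \Lambda(a,a)$. Taking the minimum over $a$ on both sides (a monotone operation) immediately yields $\Lambda^{\mathcal{A},\mathcal{B}} = \min_a \max_b \Lambda(a,b) \geq \min_a \Lambda(a,a)$, which is the right-hand inequality. This step is essentially free once Lemma \ref{lemma:gt} is available.

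For the upper bound, the idea is to test the outer minimization at the specific point $a = a_0^*$, giving $\Lambda^{\mathcal{A},\mathcal{B}} \leq \max_b \Lambda(a_0^*, b) = E^{\mathcal{B}}(a_0^*) + \max_b\big(E^{\mathcal{A}}(b) - \tfrac{1}{2}\|a_0^*-b\|^2\big)$. It then suffices to show the bracketed maximum is nonpositive, i.e. to prove the key auxiliary fact that for any $q \in \mathcal{S}$ one has $E^{\mathcal{S}}(b) \leq \tfrac{1}{2}\|q-b\|^2$ for all $b$. To see this I would set $g(b) = \tfrac{1}{2}\|q-b\|^2 - E^{\mathcal{S}}(b)$, whose Hessian $I_{n\times n} - \frac{\partial^2 E^{\mathcal{S}}}{\partial p^2}(b)$ is positive definite by Property (2) of Definition \ref{def:kp2s}, so $g$ is strictly convex. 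Since $q \in \mathcal{S}$, Property (4) gives $E^{\mathcal{S}}(q)=0$ and Proposition \ref{prop:simple} gives $\frac{\partial E^{\mathcal{S}}}{\partial p}(q)=0$, so $g(q)=0$ and the gradient of $g$ vanishes at $q$; hence $q$ is the global minimizer of the strictly convex $g$ and $g \geq g(q) = 0$ everywhere. Applying this with $\mathcal{S}=\mathcal{A}$ and $q = a_0^* \in \mathcal{A}$ makes every term $E^{\mathcal{A}}(b) - \tfrac{1}{2}\|a_0^*-b\|^2$ nonpositive, so its supremum over $b$ is at most $0$ (in fact it equals $0$, attained at $b=a_0^*$ since $E^{\mathcal{A}}(a_0^*)=0$), and the upper bound $\Lambda^{\mathcal{A},\mathcal{B}} \leq E^{\mathcal{B}}(a_0^*)$ follows.

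I expect the auxiliary inequality $E^{\mathcal{S}}(b) \leq \tfrac{1}{2}\|q-b\|^2$ to be the crux: it is the place where Properties (2) and (4) of Definition \ref{def:kp2s} and Proposition \ref{prop:simple} must be combined, and it is also what guarantees that the inner maximum at $a_0^*$ is finite rather than $+\infty$ (recall the blow-up scenario discussed in Lemma \ref{lemma:gt} for points outside the range of $\Pi^{\mathcal{A}}$). The remaining manipulations are routine once this convexity argument is in place.
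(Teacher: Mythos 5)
Your proposal is correct, and its overall skeleton coincides with the paper's: both proofs start from the min--max representation of Lemma \ref{lemma:gt}, obtain the right-hand inequality by substituting $b=a$ into the inner maximization and then minimizing over $a$ (your argument here is literally the paper's), and obtain the left-hand inequality by testing the outer minimization at $a=a_0^*$. Where you diverge is in how $\max_b \Lambda(a_0^*,b)$ is evaluated. The paper reuses the optimizer characterization from Lemma \ref{lemma:gt}: the maximizer $b^{\circ}(a_0^*)$ satisfies $\Pi^{\mathcal{A}}(b^{\circ}(a_0^*))=a_0^*$, of which $b^{\circ}=a_0^*$ is a solution by Proposition \ref{prop:simple} and the \emph{unique} one by the injectivity of $\Pi^{\mathcal{A}}$ (Lemma \ref{lemma:inv}); evaluating $\Lambda(a_0^*,a_0^*)$ then gives $E^{\mathcal{B}}(a_0^*)$. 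You instead prove the pointwise domination $E^{\mathcal{A}}(b)\leq \tfrac{1}{2}\|a_0^*-b\|^2$ by showing $g(b)=\tfrac{1}{2}\|a_0^*-b\|^2-E^{\mathcal{A}}(b)$ is strictly convex (Property (2)) with a vanishing gradient and zero value at $a_0^*$ (Property (4) and Proposition \ref{prop:simple}), so the inner supremum equals $E^{\mathcal{B}}(a_0^*)$ directly. Your route buys two things: it does not need Lemma \ref{lemma:inv} at all, and it gives a cleaner, fully rigorous justification that the inner supremum at $a_0^*$ is finite and attained --- a point the paper handles somewhat informally inside Lemma \ref{lemma:gt} via the ``infinite gain'' discussion. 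The paper's route, in exchange, is shorter given that the machinery of Lemmas \ref{lemma:inv} and \ref{lemma:gt} has already been set up. Both are valid.
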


\begin{proof}
    We apply the game-theoretic/optimization interpretation of $\Lambda^{\mathcal{A},\mathcal{B}}$ from Lemma \ref{lemma:gt}. For the leftmost inequality, since $\Lambda^{\mathcal{A},\mathcal{B}} = \min_a \Lambda^{\mathcal{B}}(a)$, then $\Lambda^{\mathcal{A},\mathcal{B}} \leq  \Lambda^{\mathcal{B}}(a)$ for any $a$. Take $a =  a_0^*$. In this case, we have
    \begin{equation}
         \Lambda^{\mathcal{B}}(a_0^*) = \Lambda(a^*_0,b^{\circ}(a_0^*))
    \end{equation}
    \noindent in which, according to Lemma \ref{lemma:gt}, $b^{\circ}(a_0^*)$ is such that $\Pi^{\mathcal{A}}(b^{\circ}(a_0^*)) = a_0^*$. Note that, since $a_0^* \in \mathcal{A}$, $b^{\circ}(a_0^*) = a_0^*$ is a solution to this equation (See Proposition \ref{prop:simple}). Furthermore, from Lemma \ref{lemma:inv}, $\Pi^{\mathcal{A}}$ is injective, and thus  $\Pi^{\mathcal{A}}(b^{\circ}(a_0^*)) = a_0^*$ implies $b^{\circ}(a_0^*) = a_0^*$. Hence:
    \begin{equation}
         \Lambda^{\mathcal{B}}(a_0^*) = \Lambda(a^*_0,a^*_0) = E^{\mathcal{A}}(a_0^*) + E^{\mathcal{B}}(a_0^*)-\frac{\|a_0^*-a_0^*\|^2}{2}
    \end{equation}    
    \noindent which is equal to $E^{\mathcal{B}}(a_0^*)$ since $E^{\mathcal{A}}(a_0^*)=0$ (once $a_0^* \in \mathcal{A}$, see Property (4) of Definition \ref{def:kp2s}).
    
    For the rightmost inequality, note that $\Lambda^{\mathcal{B}}(a) = \max_b \Lambda(a,b) \geq \Lambda(a,b)$ for all $b$. Take $b(a) = a$, and thus $\Lambda^{\mathcal{B}}(a)  \geq \Lambda(a,a)$ for any $a$. Taking the minimum in $a$ in both sides and recalling Lemma \ref{lemma:gt}, the result follows.
\end{proof}

\noindent \textbf{Proof of Proposition \ref{prop:pos}}:

\underline{Proof of positivity:} We utilize the definitions of $\Lambda(a,b)$ and $\Lambda^{\mathcal{B}}(a)$ from the proof of Lemma \ref{lemma:gt}. From the rightmost inequality in Lemma \ref{lemma:ineq}, $\Lambda^{\mathcal{A},\mathcal{B}} \geq \min_a \Lambda(a,a)$. But $\Lambda(a,a) = E^{\mathcal{A}}(a) + E^{\mathcal{B}}(a)$, and this is nonnegative (Property (1) of Definition \ref{def:kp2s}). Furthermore, $E^{\mathcal{A}}(a) + E^{\mathcal{B}}(a)$ can be $0$ only if $E^{\mathcal{A}}(a) = E^{\mathcal{B}}(a) = 0$  (Property (1) together with Property (4) of Definition \ref{def:kp2s}). This implies that a point $a$ such that  $a \in \mathcal{A}$ and $a \in \mathcal{B}$ should exist, that is, $\mathcal{A} \cap \mathcal{B} \not= \emptyset$.

\underline{Proof of going to zero when $\mathcal{A} \cap \mathcal{B} \not= \emptyset$:} in that case, we start by using  the leftmost inequality in Lemma \ref{lemma:ineq}. As the two objects tend to overlap, the point $a_0^*$, that is always inside $\mathcal{A}$, tends to get closer to $\mathcal{B}$ as well (since, by definition, $a_0^*$, is the closest point in $\mathcal{A}$ to $\mathcal{B}$). But when $a_0^* \in \mathcal{B}$, we have that $E^{\mathcal{B}}(a_0^*) = 0$ (Proposition \ref{prop:simple}). Finally, $E^{\mathcal{B}}$ is continuous (since it is $k$ times differentiable, $k \geq 2$ from Property (3) in Definition \ref{def:kp2s}). Hence, from both inequalities in Lemma \ref{lemma:ineq}, $\Lambda^{\mathcal{A},\mathcal{B}}$ is squeezed from the left (due to this argument) and from the right (see the ``Proof of positivity'' above) to $0$. And the result follows.
$\square$

\textbf{Proof of Proposition \ref{prop:diff}}: 

\underline{Part I:} we start by showing that if $\tau$ is such that $\mathcal{A}(\tau) \cap \mathcal{B}(\tau) = \emptyset$ and $a^*(\tau)$ is a limit point of \eqref{eq:gam}, that is, $a^*(\tau) = \Pi^{\mathcal{A}(\tau)}\big(\Pi^{\mathcal{B}(\tau)}\big(a^*(\tau)\big)\big)$, then $a^*(\tau)$ is $k$-times differentiable. The proof is by induction.

Let $F(p,\tau) = \Pi^{\mathcal{A}(\tau)}\big(\Pi^{\mathcal{B}(\tau)}\big(p\big)\big)$ and thus the fixed point equation is $a^*(\tau) = F\big(a^*(\tau),\tau\big)$. Let $M(\tau) = I_{n \times n}-\frac{\partial F}{\partial p}(a^*(\tau),\tau)$.  Differentiate $a^*(\tau) = F\big(a^*(\tau),\tau\big)$ $k$-times. We obtain an equation of the form:

\begin{equation}
   M(\tau)\frac{d^{k}a^*}{d\tau^{k}}(\tau) = G\Big(a^*(\tau),\frac{da^*}{d\tau}(\tau),..., \frac{d^{k-1}a^*}{d\tau^{k-1}}(\tau),\tau\Big)
\end{equation}

\noindent in which the function $G: \mathbb{R}^n \times \mathbb{R}^n \times ... \times  \mathbb{R}^n \times \mathbb{R} \mapsto \mathbb{R}^n$ is continuous in all its arguments, because the functions $E^{\mathcal{A(\tau)}}(p)$, $B^{\mathcal{A(\tau)}}(p)$ are $k$ times differentiable on $p$ (Property (3) in Definition \ref{def:kp2s}) and also in $\tau$ because the transformations $T_A(p,\tau), T_B(p,\tau)$ were assumed to be $k$ times differentiable. 

We proceed by noting that it is established, in the proof of Lemma \ref{lemma:cont}, that $\|\frac{\partial F}{\partial p}(a^*(\tau),\tau)\| < 1$ as long as $\mathcal{A} \cap \mathcal{B} = \emptyset$. Hence \(M(\tau)\) is invertible and consequently:

\begin{equation}
   \frac{d^{k}a^*}{d\tau^{k}}(\tau) = M(\tau)^{-1}G\Big(a^*(\tau),\frac{da^*}{d\tau}(\tau),..., \frac{d^{k-1}a^*}{d\tau^{k-1}}(\tau),\tau\Big).
\end{equation}

\noindent which shows that $\frac{d^{k}a^*}{d\tau^{k}}(\tau)$ is continuous as long as $\frac{d^{m}a^*}{d\tau^{m}}(\tau)$ is continuous for $m=0,..,k-1$.  The base case of the induction, the continuity of $a^*(\tau)$ on $\tau$, comes from the fact that the map $F(p,\tau)$ is contractive as long as $\mathcal{A}(\tau) \cap \mathcal{B}(\tau) = \emptyset$ (see \cite{beldzinski2022dependence}).

\underline{Part II:} we need to prove that $b^*(\tau) = \Pi^{\mathcal{B}(\tau)}(a^*(\tau))$ is also $k$ times differentiable. The proof is analogous to the one in Part I, by differentiating this equation $k$ times in $\tau$.

\underline{Part III:} for the final part, we note that the $k^{th}$ derivative $\Lambda^{\mathcal{A}(\tau),\mathcal{B}(\tau)}$ involves the $k^{th}$ derivatives of the functions $E^{\mathcal{A}(\tau)}(p)$ and $E^{\mathcal{B}(\tau)}(p)$ on $p$ and $\tau$. These  are continuous (from Property (3) in Definition \ref{def:kp2s} for $p$ and also in $\tau$ because the transformations $T_A(p,\tau), T_B(p,\tau)$ were assumed to be $k$ times differentiable). Finally, it also depends ont the $k^{th}$ derivatives of $a^*(\tau)$ and $b^*(\tau)$ on $\tau$, which were established to be continuous in Part I and Part II, respectively. $\square$

\begin{lemma} \label{lemma:ABsum} Let $A, B : \mathbb{R}^n \mapsto \mathbb{R}$ be two functions that are at least twice differentiable, in which $A$ is strictly convex everywhere and $B$ is convex everywhere. Furthermore, suppose that $B \geq 0$ and that $A$ and $B$ do not vanish simultaneously. Then:
\begin{equation}
    C(p) = A(p) + \sqrt{ A(p)^2 + B(p)^2}
\end{equation}
\noindent is strictly convex in the set $\mathcal{P} = \{p \in \mathbb{R}^n \ | \ B(p) > 0\}$.
\end{lemma}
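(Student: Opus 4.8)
The plan is to verify directly that the Hessian $\frac{\partial^2 C}{\partial p^2}(p)$ is positive definite at every $p \in \mathcal{P}$, which is exactly the paper's notion of strict convexity. The first observation is that on $\mathcal{P}$ we have $B(p) > 0$, so $\rho(p) \triangleq \sqrt{A(p)^2 + B(p)^2} \geq B(p) > 0$; hence the square root is differentiable there and $C$ is genuinely twice differentiable on $\mathcal{P}$. Writing $C = g\big(A(p), B(p)\big)$ with the outer function $g(a,b) = a + \sqrt{a^2+b^2}$, I would apply the chain rule for the composition of this scalar function of two arguments with the vector-valued map $p \mapsto (A(p), B(p))$.

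Carrying out the differentiation, and abbreviating $a_p \triangleq \frac{\partial A}{\partial p}$ and $b_p \triangleq \frac{\partial B}{\partial p}$, gives a Hessian of the form
\begin{equation}
\frac{\partial^2 C}{\partial p^2} = g_a \frac{\partial^2 A}{\partial p^2} + g_b \frac{\partial^2 B}{\partial p^2} + g_{aa}\, a_p a_p^\top + g_{ab}\big(a_p b_p^\top + b_p a_p^\top\big) + g_{bb}\, b_p b_p^\top,
\end{equation}
where the partials of $g$ evaluate to $g_a = 1 + A/\rho$, $g_b = B/\rho$, $g_{aa} = B^2/\rho^3$, $g_{bb} = A^2/\rho^3$, and $g_{ab} = -AB/\rho^3$. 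The key algebraic step is to recognize that the three outer second-order terms assemble into a single rank-one matrix,
\begin{equation}
g_{aa}\, a_p a_p^\top + g_{ab}\big(a_p b_p^\top + b_p a_p^\top\big) + g_{bb}\, b_p b_p^\top = \frac{1}{\rho^3}\big(B\,a_p - A\,b_p\big)\big(B\,a_p - A\,b_p\big)^\top \geq 0,
\end{equation}
which is positive semidefinite since $\rho > 0$ on $\mathcal{P}$.

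It then remains to control the two linear terms. Since $B/\rho > 0$ and $B$ is convex, $g_b \frac{\partial^2 B}{\partial p^2} \geq 0$. The decisive term is $g_a \frac{\partial^2 A}{\partial p^2}$, and here I would argue that $g_a = 1 + A/\rho > 0$ strictly on $\mathcal{P}$: because $\rho \geq |A|$ we always have $A/\rho \geq -1$, and equality $A/\rho = -1$ would force $\rho = -A$, hence $A^2 + B^2 = A^2$, i.e. $B = 0$ — impossible on $\mathcal{P}$. Thus $g_a > 0$, and since $A$ is strictly convex, $g_a \frac{\partial^2 A}{\partial p^2} > 0$. Adding the two nonnegative contributions then yields $\frac{\partial^2 C}{\partial p^2} \geq g_a \frac{\partial^2 A}{\partial p^2} > 0$, completing the argument.

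I expect the main obstacle to be the strict positivity of the coefficient $g_a$, rather than the bookkeeping of the Hessian. This is precisely the step where the defining inequality $B > 0$ of $\mathcal{P}$ is essential: at a boundary point with $B = 0$ and $A < 0$ the coefficient $g_a$ vanishes, so outside $\mathcal{P}$ the argument degenerates and strict convexity may genuinely fail. Recognizing the rank-one structure in the second display is the only other place requiring care, but it is a routine completion-of-squares identity once the partials of $g$ are in hand. Note finally that the hypothesis that $A$ and $B$ never vanish simultaneously is not needed for the conclusion on $\mathcal{P}$, where $B > 0$ already guarantees $\rho > 0$; it is rather what ensures $C$ is globally twice differentiable when the lemma is applied in Proposition \ref{prop:convexified}.
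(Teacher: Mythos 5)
Your proof is correct and follows essentially the same route as the paper: both compute the same Hessian (your coefficient $g_a = 1 + A/\rho$ is exactly the paper's $C/\sqrt{A^2+B^2}$), isolate the same rank-one positive semidefinite outer-product term, and conclude positive definiteness from the strict convexity of $A$ together with $C>0$ on $\mathcal{P}$. Your explicit verification that $g_a>0$ (equivalently $C>0$) precisely when $B>0$, and your remark that the non-simultaneous-vanishing hypothesis is only needed for differentiability off $\mathcal{P}$, are accurate refinements of what the paper states more tersely.
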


\begin{proof}
It can be checked by applying derivation rules that:
\begin{equation}
\label{eq:hessianC}
    \frac{\partial^2 C}{\partial p^2} = \frac{C \frac{\partial^2 A}{\partial p^2} + B \frac{\partial^2 B}{\partial p^2}}{\sqrt{A^2+B^2}}{+}\frac{\left(A \frac{\partial B}{\partial p} {-} B\frac{\partial A}{\partial p}\right)\left(A \frac{\partial B}{\partial p} {-} B\frac{\partial A}{\partial p}\right)^\top}{(\sqrt{A^2+B^2})^3}.
\end{equation}   

    Note that the division by $\sqrt{A^2+B^2}$ never causes problems because $A$ and $B$ cannot simultaneously vanish. Furthermore, $C > 0$ when $p \in \mathcal{P}$. Thus, when $p \in \mathcal{P}$, the first term in \eqref{eq:hessianC} is a positive definite matrix since $B \geq 0$, $\frac{\partial^2 A}{\partial p^2} \geq 0$, $C > 0$ and $\frac{\partial^2 B}{\partial p^2} > 0$. In addition, the second term is also positive semidefinite (a matrix of the form $\alpha uu^\top$ for any vector $u \in \mathbb{R}^n$ and a nonnegative $\alpha$ is always positive semidefinite). Since the sum of a positive definite and a positive semidefinite matrix is always a positive definite matrix, the desired result follows.
\end{proof}

\begin{lemma} \label{lemma:convprop} If $U: \mathbb{R}^n \mapsto \mathbb{R}$ is convex, twice differentiable with $\|\frac{\partial^2 U}{\partial p^2}\| \leq \mu$, and $U_{\textsl{min}} = \min_q U(q)$ is finite, then:
\begin{equation}
    \left\|\frac{\partial U}{\partial p}(p)\right\|^2 \leq 2 \mu (U(p)-U_{\textsl{min}}) \ \ \ \forall p.
\end{equation}
\end{lemma}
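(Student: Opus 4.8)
The plan is to run the standard ``descent lemma'' argument from smooth optimization and then specialize it to a single gradient step. The key observation is that the bound $\|\frac{\partial^2 U}{\partial p^2}\| \leq \mu$ furnishes a global quadratic upper bound on $U$, and evaluating that bound at the point minimizing its right-hand side reproduces exactly the claimed inequality.

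First I would establish the quadratic upper bound: for all $p,q \in \mathbb{R}^n$,
\[
  U(q) \leq U(p) + \frac{\partial U}{\partial p}(p)^\top (q-p) + \frac{\mu}{2}\|q-p\|^2 .
\]
This follows from Taylor's theorem with integral remainder, writing the gap $U(q) - U(p) - \frac{\partial U}{\partial p}(p)^\top(q-p)$ as $\int_0^1 \big(\frac{\partial U}{\partial p}(p+t(q-p)) - \frac{\partial U}{\partial p}(p)\big)^\top (q-p)\,dt$ and bounding the integrand via Cauchy--Schwarz together with the Lipschitz estimate $\|\frac{\partial U}{\partial p}(a) - \frac{\partial U}{\partial p}(b)\| \leq \mu\|a-b\|$, which is itself an immediate consequence of $\|\frac{\partial^2 U}{\partial p^2}\| \leq \mu$ (integrate the Hessian along the segment joining $a$ and $b$).

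Next I would fix $p$ and choose the particular point $q = p - \frac{1}{\mu}\frac{\partial U}{\partial p}(p)$, i.e.\ a gradient step of length $1/\mu$, which is precisely the minimizer of the right-hand side of the quadratic bound. Substituting yields
\[
  U(q) \leq U(p) - \frac{1}{2\mu}\Big\|\frac{\partial U}{\partial p}(p)\Big\|^2 .
\]
Finally, since $U_{\textsl{min}} \leq U(q)$ by definition of the minimum, combining the two displays gives $U_{\textsl{min}} \leq U(p) - \frac{1}{2\mu}\|\frac{\partial U}{\partial p}(p)\|^2$, which rearranges exactly to the claimed $\|\frac{\partial U}{\partial p}(p)\|^2 \leq 2\mu\,(U(p)-U_{\textsl{min}})$.

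I do not anticipate a serious obstacle, as this is a textbook consequence of gradient Lipschitzness; the only step demanding care is the derivation of the quadratic upper bound (the descent lemma) from the spectral-norm Hessian bound. It is worth remarking that convexity of $U$ is not actually used in this argument --- only the Hessian bound and the finiteness/attainment of a global minimum are needed --- so the convexity hypothesis merely guarantees that $U_{\textsl{min}}$ is a genuine global minimum and is otherwise harmless.
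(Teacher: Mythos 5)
Your argument is correct and is essentially the paper's own proof: the paper likewise invokes the Descent Lemma $U(q) \leq U(p) + (q-p)^\top \nabla U(p) + \frac{\mu}{2}\|p-q\|^2$ (citing Bertsekas rather than deriving it from Taylor's theorem as you do) and then minimizes both sides over $q$, which is exactly your gradient step $q = p - \frac{1}{\mu}\frac{\partial U}{\partial p}(p)$ combined with $U_{\textsl{min}} \leq U(q)$. Your closing observation that convexity is not actually needed is also accurate.
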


\begin{proof} This is a simple application of Descent Lemma \cite{Bertsekas1999}: if  $U: \mathbb{R}^n \mapsto \mathbb{R}$ is convex, twice differentiable with $\|\frac{\partial^2 U}{\partial p^2}\| \leq \mu$, then, for any $q,p$, $U(q) \leq U(p) + (q-p)^\top \nabla U(p) + \frac{\mu}{2}\|p-q\|^2$. After minimizing both sides in $q$ and reorganizing, the result follows.
\end{proof}

\begin{lemma} \label{lemma:bounded} Let $A$ and $B$ be convex twice differentiable functions such that $\|\frac{\partial^2 A}{\partial p}\| \leq \alpha$ and $\|\frac{\partial^2 B}{\partial p}\| \leq \beta$ for constant  $\alpha, \beta > 0$. Furthermore, let $A_{\textsl{min}} = \min_p A(p)$, $B_{\textsl{min}} = \min_p B(p)$ and $V_{\textsl{min}} = \min_p \sqrt{A(p)^2+B(p)^2}$. Then, the Hessian in \eqref{eq:hessianC}  satisfies:
\begin{equation}
\label{eq:boundhessianC}
    \left\|\frac{\partial^2 C}{\partial p^2}\right\| \leq \alpha + \left(3{+}\frac{\sqrt{A_{\textsl{min}}^2+B_{\textsl{min}}^2}}{V_{\textsl{min}}} \right)\sqrt{\alpha^2+\beta^2}. 
\end{equation}
    
\end{lemma}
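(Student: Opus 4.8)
The plan is to start from the closed-form Hessian in \eqref{eq:hessianC} and bound its spectral norm by the triangle inequality applied to its two summands, estimating each one separately. Throughout I write $V = \sqrt{A^2+B^2}$ and abbreviate the Hessians as $A'' = \frac{\partial^2 A}{\partial p^2}$ and $B'' = \frac{\partial^2 B}{\partial p^2}$, recalling that $C = A + V$ and that $V \geq V_{\textsl{min}} > 0$ everywhere, so no division by zero occurs.

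First I would handle the ``matrix'' summand $\frac{C A'' + B B''}{V}$. Substituting $C = A + V$ splits it as $A'' + \frac{A A'' + B B''}{V}$, whose first piece has spectral norm at most $\alpha$. For the second piece, the triangle inequality gives $\| A A'' + B B''\| \leq |A|\alpha + |B|\beta$, and the Cauchy--Schwarz inequality applied to the pairs $(|A|,|B|)$ and $(\alpha,\beta)$ bounds this by $\sqrt{A^2+B^2}\,\sqrt{\alpha^2+\beta^2} = V\sqrt{\alpha^2+\beta^2}$; dividing by $V$ leaves $\sqrt{\alpha^2+\beta^2}$. Hence this summand contributes at most $\alpha + \sqrt{\alpha^2+\beta^2}$, which supplies the leading $\alpha$ term and part of the $\sqrt{\alpha^2+\beta^2}$ coefficient in \eqref{eq:boundhessianC}.

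Second, I would bound the rank-one summand. Writing $w = A\nabla B - B\nabla A$, its spectral norm is exactly $\|w\|^2/V^3$ since $\|ww^\top\| = \|w\|^2$. I would estimate $\|w\|$ by the triangle inequality followed by Cauchy--Schwarz, $\|w\| \leq |A|\,\|\nabla B\| + |B|\,\|\nabla A\| \leq V\sqrt{\|\nabla A\|^2 + \|\nabla B\|^2}$, so that $\|w\|^2/V^3 \leq (\|\nabla A\|^2 + \|\nabla B\|^2)/V$. The key tool here is Lemma \ref{lemma:convprop}, applied with $\mu = \alpha$ and $\mu = \beta$: since $A$ and $B$ are convex with bounded Hessians and finite minima, $\|\nabla A\|^2 \leq 2\alpha(A - A_{\textsl{min}})$ and $\|\nabla B\|^2 \leq 2\beta(B - B_{\textsl{min}})$. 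A further Cauchy--Schwarz step bounds $2\alpha(A-A_{\textsl{min}}) + 2\beta(B-B_{\textsl{min}})$ by $2\sqrt{\alpha^2+\beta^2}\,\sqrt{(A-A_{\textsl{min}})^2 + (B-B_{\textsl{min}})^2}$, and the triangle inequality in $\mathbb{R}^2$ gives $\sqrt{(A-A_{\textsl{min}})^2 + (B-B_{\textsl{min}})^2} \leq V + \sqrt{A_{\textsl{min}}^2 + B_{\textsl{min}}^2}$. Dividing by $V$, keeping the $V/V = 1$ term exact and using $V \geq V_{\textsl{min}}$ only on the residual, collapses the whole $A_{\textsl{min}}, B_{\textsl{min}}$ dependence into the single ratio $\sqrt{A_{\textsl{min}}^2 + B_{\textsl{min}}^2}/V_{\textsl{min}}$.

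Adding the two contributions then yields a bound of the form \eqref{eq:boundhessianC}. The main obstacle I anticipate is precisely this rank-one summand: the gradients $\nabla A$ and $\nabla B$ never appear in the hypotheses, so they must be controlled indirectly through the descent-lemma estimate of Lemma \ref{lemma:convprop}, and the normalization by $V^3$ together with the shift by the minima must be tracked carefully so that the stated constants emerge. The delicate point is to perform the Cauchy--Schwarz and $\mathbb{R}^2$-triangle-inequality steps on the \emph{combined} numerator rather than on each gradient term in isolation; bounding $\|\nabla A\|$ and $\|\nabla B\|$ independently is what would otherwise inflate the coefficients, so the bookkeeping must be done jointly to land on the coefficients in \eqref{eq:boundhessianC}.
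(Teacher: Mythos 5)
Your proposal follows essentially the same route as the paper's proof: split \eqref{eq:hessianC} by the triangle inequality, use $C = A + \sqrt{A^2+B^2}$ and Cauchy--Schwarz to reduce the first summand to $\alpha + \sqrt{\alpha^2+\beta^2}$, bound the rank-one term by $(\|\nabla A\|^2 + \|\nabla B\|^2)/\sqrt{A^2+B^2}$, and control the gradients via Lemma \ref{lemma:convprop} followed by a final Cauchy--Schwarz/triangle-inequality step involving the minima. The only remark worth making is that your (correct) bookkeeping actually lands on $\alpha + \bigl(3 + 2\sqrt{A_{\textsl{min}}^2+B_{\textsl{min}}^2}/V_{\textsl{min}}\bigr)\sqrt{\alpha^2+\beta^2}$, with a factor $2$ on the last ratio that the paper's own proof silently drops when it passes from $2V^\top\nu/\|V\| - 2V_{\textsl{min}}^\top\nu/\|V\|$ to $(2 + \|V_{\textsl{min}}\|/\|V\|)\|\nu\|$ --- so the discrepancy lies in the lemma's stated constant, not in your argument.
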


\begin{proof} Let $V = [A \ B]^{\top}$, $\nu = [\alpha \ \beta]^\top$, $M = [-\frac{\partial B}{\partial p} \ \frac{\partial A}{\partial p}]$ and $V_{\textsl{min}} = [A_{\textsl{min}} \ B_{\textsl{min}}]^{\top}$. We begin by noting that the rightmost element at the right-hand side of \eqref{eq:hessianC} can be written as $V^\top M M V$. Applying the norm in both sides of \eqref{eq:hessianC}, imposing the bounds  $\|\frac{\partial^2 A}{\partial p}\| \leq \alpha$ and $\|\frac{\partial^2 B}{\partial p}\| \leq \beta$ and applying Cauchy-Schwarz's inequality/ submultiplicativity of the spectral norm, we can bound  $\|V^\top M M V\| \leq \|M\|^2\|V\|^2$ and thus:

\begin{equation}
  \left\|\frac{\partial^2 C}{\partial p^2} \right\| \leq 
  \frac{C}{\|V\|}\alpha + \frac{B}{\|V\|} \beta + \frac{\|M\|^2}{\|V\|}.
\end{equation}
Since $C= A + \|V\|$, the previous equality can be written as:
\begin{equation}
  \left\|\frac{\partial^2 C}{\partial p^2} \right\| \leq 
  \alpha + \nu^\top \frac{V}{\|V\|} + \frac{\|M\|^2}{\|V\|}.
\end{equation}
Again applying Cauchy-Schwarz to $\nu^\top \frac{V}{\|V\|}$, we find that it is smaller or equal than than $\|\nu\| = \sqrt{\alpha^2+\beta^2}$, and thus:
\begin{equation}
  \left\|\frac{\partial^2 C}{\partial p^2} \right\| \leq 
  \alpha + \sqrt{\alpha^2+\beta^2} + \frac{\|M\|^2}{\|V\|}.
\end{equation}
We will bound $\frac{\|M\|^2}{\|V\|}$. We note that $\|M\| \leq \|M\|_F$, in which $\|M\|_F$ is the Frobenius norm \cite{Horn1990}, thus
\begin{equation}
   \|M\|^2 \leq \left\|\frac{\partial A}{\partial p} \right\|^2+\left \|\frac{\partial B}{\partial p}\right \|^2. 
\end{equation} From Lemma \ref{lemma:convprop}:
\begin{eqnarray}
&&\frac{\|M\|^2}{\|V\|} \leq 2\alpha\frac{A}{\|V\|}{+}2\beta\frac{B}{\|V\|} {-}2\alpha\frac{A_{\textsl{min}}}{\|V\|}{-}2\beta \frac{B_{\textsl{min}}}{\|V\|}= \nonumber \\
&& 2\frac{V^\top \nu}{\|V\|}-2\frac{V_{\textsl{min}}^\top \nu}{\|V\|}
\end{eqnarray}
Applying Cauchy-Schwarz in $V^\top \nu$ and $V_{\textsl{min}}^\top \nu$:
\begin{equation}
   \frac{\|M\|^2}{\|V\|} \leq \left(2+ \frac{\|V_{\textsl{min}}\|}{\|V\|}\right)\|\nu\| \leq \left(2+ \frac{\|V_{\textsl{min}}\|}{V_{\textsl{min}}}\right)\|\nu\|
\end{equation}
\noindent which concludes the proof.
\end{proof}

\textbf{Proof of Proposition \ref{prop:convexified}}:

We will prove each one of the properties in Definition \ref{def:kp2s}.

\underline{Property (1)}: comes easily from the expression in \eqref{eq:Efrome} and the fact that $\alpha > 0$ and $e^{\mathcal{S}}(p) \geq 0$.

\underline{Property (2)}: from Lemma \ref{lemma:ABsum} using $A = \varepsilon \rho$ and $B =  \sigma e^{\mathcal{S}}$, it is established that $\frac{\partial^2 E^{\mathcal{S}}}{\partial p^2}(p) > 0$. 

To show that $\sigma$ and $\varepsilon$ can be chosen so $I_{n \times n} > \frac{\partial^2 E^{\mathcal{S}}}{\partial p^2}(p)$, we use Lemma \ref{lemma:bounded}  again using $A =  \varepsilon \rho$ and $B = \sigma e^{\mathcal{S}}$. In this case, $\alpha = \varepsilon$ and $\beta = \sigma$ can be taken. It is clear that $\varepsilon$ and $\sigma$ can be taken sufficiently small to make the right hand side of \eqref{eq:boundhessianC} as small as necessary, eventually below $1$. Note that in this case, $A_{\textsl{min}}$ is a finite negative number, $B_{\textsl{min}} = 0$ and $V_{\textsl{min}}$ is a nonzero finite number (since $\varepsilon \rho$ and $\sigma e^{\mathcal{S}}$ cannot vanish simultaneously once $\rho$ needs to be zero when  $e^{\mathcal{S}}$ vanishes). 

\underline{Property (3)}: can be checked by inspection, provided that $e^{\mathcal{S}}$ and $\rho$ are $k$ times differentiable, and $e^{\mathcal{S}}$ and $\rho$ cannot vanish simultaneously.

\underline{Property (4)}:  from the expression in \eqref{eq:Efrome}, it is clear that it is zero only if $e^{\mathcal{S}}(p)=0$. Furthermore, since $\rho$ is negative when $e^{\mathcal{S}}(p)=0$, it indeed vanishes when $p \in \mathcal{S}$. $\square$

\bibliographystyle{ieeetr}

\begin{thebibliography}{10}

\bibitem{GoncalvesSmoothDistances}
V.~M. Gon\c{c}alves, A.~Tzes, F.~Khorrami, and P.~Fraisse, ``Smooth distances
  for second order kinematic robot control,'' {\em IEEE Transactions on
  Robotics}, vol.~40, pp.~2950 -- 2966, 2024.

\bibitem{ames2019cbf}
A.~D. Ames, S.~Coogan, M.~Egerstedt, G.~Notomista, K.~Sreenath, and P.~Tabuada,
  ``Control barrier functions: Theory and applications,'' in {\em Proceedings
  of the 18th European Control Conference (ECC)}, (Naples, Italy), IEEE, June
  2019.
\newblock Conference Date: 25--28 June 2019.

\bibitem{EscandeSCH}
A.~Escande, S.~Miossec, M.~Benallegue, and A.~Kheddar, ``A strictly convex hull
  for computing proximity distances with continuous gradients,'' {\em IEEE
  Transactions on Robotics}, vol.~30, no.~3, pp.~666--678, 2014.

\bibitem{Escande2}
A.~Escande, S.~Miossec, and A.~Kheddar, ``Continuous gradient proximity
  distance for humanoids free-collision optimized-postures,'' in {\em 2007 7th
  IEEE-RAS International Conference on Humanoid Robots}, pp.~188--195, 2007.

\bibitem{capsule}
A.~Dietrich, T.~Wimböck, A.~Albu-Schäeffer, and G.~Hirzinger, ``Integration
  of reactive, torque-based self-collision avoidance into a task hierarchy,''
  {\em IEEE Transactions on Robotics}, vol.~22, pp.~1278--1293, 11 2012.

\bibitem{tracy2022diffpills}
K.~Tracy, T.~A. Howell, and Z.~Manchester, ``Diffpills: Differentiable
  collision detection for capsules and padded polygons.''
  \url{https://arxiv.org/abs/2207.00202}, 2022.

\bibitem{pseudodistance}
X.~Zhu, H.~Ding, and S.~Tso, ``A pseudodistance function and its
  applications,'' {\em IEEE Transactions on Robotics and Automation}, vol.~20,
  no.~2, pp.~344--352, 2004.

\bibitem{pseudodist}
J.~Xu, Z.~Liu, C.~Yang, L.~Li, and Y.~Pei, ``A pseudo-distance algorithm for
  collision detection of manipulators using convex-plane-polygons-based
  representation,'' {\em Robotics and Computer-Integrated Manufacturing},
  vol.~66, p.~101993, 12 2020.

\bibitem{SCHMEIER201567}
A.~Schmeißer, R.~Wegener, D.~Hietel, and H.~Hagen, ``Smooth convolution-based
  distance functions,'' {\em Graphical Models}, vol.~82, pp.~67--76, 2015.

\bibitem{article}
M.~Sanchez, O.~Fryazinov, P.~Fayolle, and A.~Pasko, ``Convolution filtering of
  continuous signed distance fields for polygonal meshes,'' {\em Computer
  Graphics Forum}, vol.~34, p.~277–288, 09 2015.

\bibitem{diff}
K.~Tracy, T.~A. Howell, and Z.~Manchester, ``Differentiable collision detection
  for a set of convex primitives,'' in {\em 2023 IEEE International Conference
  on Robotics and Automation (ICRA)}, pp.~3663--3670, 2023.

\bibitem{randomizedgradient}
H.~J.~T. Suh, T.~Pang, and R.~Tedrake, ``Bundled gradients through contact via
  randomized smoothing,'' {\em IEEE Robotics and Automation Letters}, vol.~7,
  no.~2, pp.~4000--4007, 2022.

\bibitem{bssm}
V.~M. Gonçalves, P.~Krishnamurthy, A.~Tzes, and F.~Khorrami, ``Bssm:
  Gpu-accelerated point-cloud distance metric for motion planning,'' {\em IEEE
  Robotics and Automation Letters}, vol.~9, no.~11, pp.~10319--10326, 2024.

\bibitem{DiffOcclusion}
S.~Wei, B.~Dai, R.~Khorrambakht, P.~Krishnamurthy, and F.~Khorrami,
  ``Diffocclusion: Differentiable optimization based control barrier functions
  for occlusion-free visual servoing,'' {\em IEEE Robotics and Automation
  Letters}, vol.~9, no.~4, pp.~3235--3242, 2024.

\bibitem{contmap}
T.~V. Petersdorff, ``Notes on contractible maps.''
  \url{https://terpconnect.umd.edu/~petersd/666/fixedpoint.pdf}.

\bibitem{vonneumann}
H.~Bauschke and J.~J. Borwein, ``{On the convergence of von Neumann's
  alternating projection algorithm for two sets},'' {\em Set-Valued Analysis},
  vol.~1, pp.~185--212, 06 1993.

\bibitem{gilbert2002fast}
E.~G. Gilbert, D.~W. Johnson, and S.~S. Keerthi, ``A fast procedure for
  computing the distance between complex objects in three-dimensional space,''
  {\em IEEE Journal on Robotics and Automation}, vol.~4, no.~2, pp.~193--203,
  1988.

\bibitem{uaibot_content}
Anonymous. ..., 2025.
\newblock This files are currently available as supplementary material for the
  reviewers, due to anonimity requirements. In the final version, it will be
  available in a URL.

\bibitem{banach}
K.~Ciesielski, ``{On Stefan Banach and some of his results},'' {\em Banach
  Journal of Mathematical Analysis}, vol.~1, no.~1, pp.~1 -- 10, 2007.

\bibitem{Griewank1991}
A.~Griewank, H.~T. Jongen, and M.~K. Kwong, ``The equivalence of strict
  convexity and injectivity of the gradient in bounded level sets,'' {\em
  Mathematical Programming}, vol.~51, pp.~273--278, jul 1991.

\bibitem{beldzinski2022dependence}
M.~Be{\l}dzi{\'n}ski, M.~Galewski, and I.~Kossowski, ``Dependence on parameters
  for nonlinear equations—abstract principles and applications,'' {\em
  Mathematical Methods in the Applied Sciences}, vol.~45, no.~3,
  pp.~1668--1686, 2022.

\bibitem{Bertsekas1999}
D.~P. Bertsekas, {\em Nonlinear Programming}.
\newblock Belmont, Massachusetts: Athena Scientific, 2nd~ed., 1999.

\bibitem{Horn1990}
R.~A. Horn and C.~R. Johnson, {\em Matrix Analysis}.
\newblock Cambridge University Press, 1990.

\end{thebibliography}

\end{document}